\documentclass{article}

\PassOptionsToPackage{numbers, compress}{natbib}

\usepackage[preprint]{neurips_2026}

\usepackage[utf8]{inputenc} 
\usepackage[T1]{fontenc}    
\usepackage{hyperref}       
\usepackage{url}            
\usepackage{booktabs}       
\usepackage{amsfonts}       
\usepackage{nicefrac}       
\usepackage{microtype}      
\usepackage{xcolor}         
\usepackage{graphicx}
\usepackage{float}
\usepackage{mathtools}
\usepackage{todonotes}
\usepackage{enumitem}
\usepackage{subcaption}
\usepackage{amsthm}
\usepackage{algorithm}
\usepackage{algorithmic}
\usepackage{multirow}
\usepackage{wrapfig}

\theoremstyle{plain}
\newtheorem{proposition}{Proposition}

\theoremstyle{definition}

\theoremstyle{plain}

\newcommand{\vx}{\boldsymbol{x}}
\newcommand{\vc}{\boldsymbol{c}}
\newcommand{\vz}{\boldsymbol{z}}

\newcommand{\vk}{\boldsymbol{k}}
\newcommand{\vv}{\boldsymbol{v}}
\newcommand{\dmodel}{d_{\text{model}}}

\newcommand{\KL}{D_{\mathrm{KL}}}
\newcommand{\pos}{\mathtt{pos}}

\title{NEST: Nested Event Stream Transformer for Sequences of Multisets}

\author{
  Minghui Sun\thanks{equally contributed}$~~^{1}$ \quad Haoyu Gong\footnotemark[1]$~~^{2}$ \quad Xingyu You$^{1}$ 
  \And
  Jillian Hurst$^{3}$ \quad Benjamin Goldstein$^{1}$ \quad Matthew Engelhard$^{1}$ \\
  \\
  $^1$Department of Biostatistics \& Bioinformatics, Duke University \\
  $^2$Department of Biomedical Engineering, Duke University \\
  $^3$Department of Pediatrics, Duke University \\
  \texttt{\{minghui.sun, m.engelhard\}@duke.edu} \\
}

\begin{document}

\maketitle

\begin{abstract}
    Event stream data often exhibit hierarchical structure in which multiple events co-occur, resulting in a sequence of multisets (i.e., bags of events). In electronic health records (EHRs), for example, medical events are grouped into a sequence of clinical encounters with well-defined temporal structure, but the order and timing of events within each encounter may be unknown or unreliable. Most existing foundation models (FMs) for event stream data flatten this hierarchy into a one-dimensional sequence, leading to (i) computational inefficiency associated with dense attention and learning spurious within-set relationships, and (ii) lower-quality set-level representations from heuristic post-training pooling for downstream tasks. Here, we show that preserving the original hierarchy through interleaved within-set and cross-set encoders provides a useful inductive bias that improves both computational efficiency and representation quality. We then introduce \textbf{N}ested \textbf{E}vent \textbf{S}tream \textbf{T}ransformer (\textbf{NEST}), a FM for event streams comprised of sequences of multisets. Building on this architecture, we formulate Masked Set Modeling (MSM), a paradigm that promotes improved set-level representation learning. Experiments on real-world datasets show that NEST captures real-world dynamics while improving both pretraining efficiency and downstream performance. Our code will be publicized upon acceptance.
\end{abstract}

\section{Introduction}
While originally designed for natural language sequence data~\citep{vaswani2017attention, radford2018improving, devlin2019bert}, the Transformer's ability to model complex categorical data dependencies and dynamics has inspired its widespread application to general event stream data, which are continuous-time sequences of complex, multi-modal events \citep{mcdermott2023event}. Transformer-based foundation models (FMs) have emerged as a transformative paradigm across diverse domains, enabling rapid adaptation to specialized downstream tasks through pretraining on large-scale data. In healthcare, electronic health records (EHR) data are represented as sequences of clinical events, from which FMs extract deep representations for clinical decision support~\citep{li2020behrt, wornow2023ehrshot}. The same framework extends to commercial applications, where user purchase histories are encoded for next-item and next-basket recommendation~\citep{kang2018self, sun2019bert4rec}, and to financial domains, where real-time transaction streams are converted to deep features for fraud prevention~\citep{hu2023bert4eth}.

We first define the term \emph{multiset}, which is a \emph{set with multiplicity}: a collection that, unlike a standard set, permits duplicate elements. Event stream data frequently possesses a hierarchical structure in which multiple events co-occur, forming a sequence of multisets. In the EHR domain, for example, medical events are grouped into encounters with clear temporal ordering, but the order of events \textit{within} each encounter can be noisy, unreliable, or otherwise unimportant when modeling extended patient timelines. Within each encounter, events can be time-ordered when fine-grained temporal resolution is available (e.g., ICU settings) or treated as unordered otherwise (e.g., long-term routine clinical visits). The same structure recurs beyond clinical data: a customer's purchase history is a sequence of orders, each an unordered basket of products. We term this data structure a \textbf{seq}uence of event multi\textbf{set}s (\texttt{SeqSet}). Modeling \texttt{SeqSet} structure explicitly enables both efficient computation through structured sparsity and direct learning of set-level representations without post-hoc pooling.

\begin{figure}[t]
    \centering
    \includegraphics[width=0.65\linewidth]{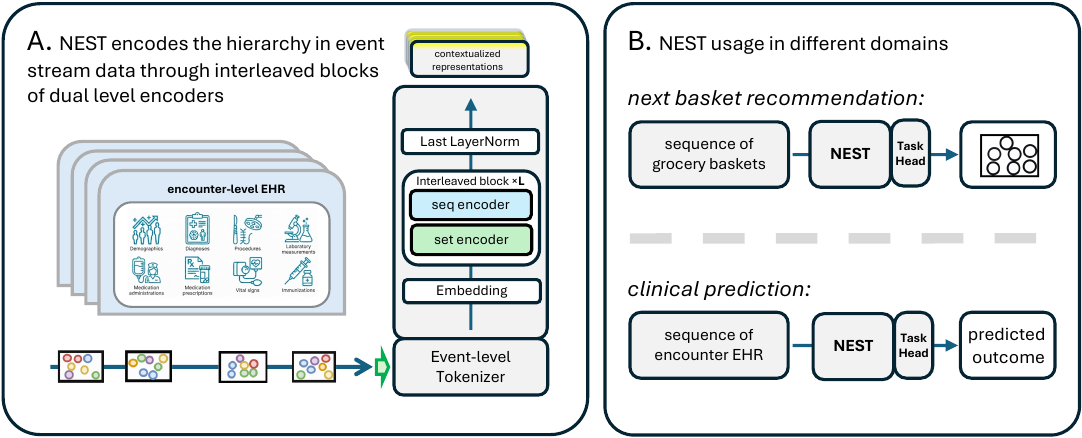}
    \caption{NEST is a foundation model for event streams structured as sequences of multisets (\texttt{SeqSet}). \textbf{(A)} Each NEST layer is an interleaved block consisting of a Set-Wise Encoder (SWE) followed by a Cross-Set Encoder (CSE), repeated $L$ times. \textbf{(B)} The same architecture instantiates across domains.
}
    \label{fig:nest-fig1}
\end{figure}

Most FMs for event-stream data adopt flat Transformer architectures~\citep{li2020behrt, rasmy2021med, wornow2023ehrshot, odgaard2024core, renc2024ethos}, which do not explicitly model the hierarchical structure of event streams. GPT-style causal modeling~\citep{mcdermott2023event, renc2024ethos} imposes a fixed token order on locally permutation-invariant groups of events, while BERT-style full attention~\citep{rasmy2021med, odgaard2024core} preserves all pairwise interactions but at computing cost that grows quadratically with context size. More fundamentally, both paradigms learn representations only at the token level through MLM or NTP; encounter-level representations, the clinically meaningful unit of analysis, are only recovered through post-hoc pooling, which prior work has shown to be suboptimal~\citep{reimers2019sentence, gao2021simcse}. The principle of learning representations at semantically coherent units rather than individual tokens has been advocated in language modeling~\citep{barrault2024lcm} and object-centric visual reasoning~\citep{locatello2020object}. We instantiate this idea for clinical event streams through joint architectural and pretraining changes, treating encounters as the natural unit of representation.

Building on hierarchical models for text~\citep{yang2016hierarchical, chalkidis2022hat} and event streams~\citep{hu2019sets2sets, li2022hibehrt}, we introduce \textbf{N}ested \textbf{E}vent \textbf{S}tream \textbf{T}ransformer (\textbf{NEST}), a foundation model that structurally aligns with \texttt{SeqSet} data. Each NEST layer pairs a set-wise encoder (SWE) for within-set token interactions with a cross-set encoder (CSE) for sequence-level dynamics, with the two encoders interleaved layer-by-layer rather than stacked in two phases. This interleaving yields \textit{dual-level contextualization}, providing (i) structurally induced sparse attention and (ii) direct set-level representations without post-hoc pooling.

Apart from the architectural modification, we also propose a separate pretraining objective, Masked Set Modeling (MSM), that is tailored for sequences of multisets and complements the standard MLM objective during NEST training. The MSM objective enables NEST to learn ready-to-use set representations, eliminating heuristic post-training pooling.

We conduct extensive experiments across diverse event stream datasets from multiple real-world domains to evaluate NEST's effectiveness. NEST achieves state-of-the-art performance and improved computational efficiency compared to Transformer models trained on flattened event streams across multiple downstream tasks from different domains. In addition, ablation studies show that the proposed MSM objective enhances set-level representation learning. We argue that the right inductive bias for event-stream foundation models is to make the architecture and pretraining objective both reflect the data's nested set structure, rather than flattening it away. Concretely, our contributions are:

\begin{itemize}[topsep=2pt, itemsep=2pt, parsep=0pt, partopsep=0pt]
    \item \textbf{Architectural primitive for SeqSet.} We introduce \textbf{NEST}, a hierarchical Transformer foundation model whose interleaved within-set and cross-set encoders respect SeqSet's opposed symmetries: intra-set exchangeability and inter-set temporal order.
    
    \item \textbf{Architectural rationale via bypass paths.} We characterize (Proposition~\ref{prop:bypass}) how interleaved composition preserves token-level information that sequential designs irrecoverably discard at the SWE-to-CSE phase boundary, with a topology-isolating simulation confirming the predicted advantage.
    
    \item \textbf{Set-level pretraining objective.} We formulate \textbf{Masked Set Modeling (MSM)}, a permutation-invariant objective that complements MLM by training \texttt{[CLS]} to predict its multiset's token distribution, which no prior EHR foundation model has trained for explicitly.
\end{itemize}

\section{Related Works}
\textbf{Hierarchical Transformer and Structured Attention.} \quad Hierarchical architectures have proven effective for modeling structured text~\citep{chalkidis2019neural, chalkidis2022hat, he2024hdt}, achieving efficiency through structurally induced sparsity. Alternative approaches to efficient attention include sliding-window patterns~\citep{beltagy2020longformer, zaheer2020big}, kernel-based or landmark approximations~\citep{choromanski2020rethinking, xiong2021nystromformer}, and learned latent bottlenecks~\citep{jaegle2021perceiver}. However, these methods trade off attention fidelity for efficiency without exploiting data-inherent grouping structure.

\textbf{EHR Foundation Models.}\quad Early EHR foundation models adapted BERT-style MLM~\citep{li2020behrt, rasmy2021med}, with subsequent work exploring autoregressive~\citep{wornow2023ehrshot, renc2024ethos}, time-to-event~\citep{steinberg2023motor}, and contrastive~\citep{jeong2023ebcl} objectives. Architecturally, CORE-BEHRT~\citep{odgaard2024core} optimizes the flat Transformer but collapses all multisets into a single sequence. Hierarchical EHR models adopt sequential two-stage designs: Hi-BEHRT~\citep{li2022hibehrt} uses fixed sliding windows that can fragment multisets; GT-BEHRT~\citep{poulain2024graph} decouples a per-visit graph encoder from a sequence Transformer during pretraining; and HEART~\citep{huang2024heart} enriches within-encounter representations with additional pairwise relational embeddings. NEST instead interleaves within-set and cross-set processing, respects multiset boundaries, and introduces MSM for explicit set-level representation learning.

\textbf{Set-level Representation.}\quad Naive pooling strategies such as mean-pooling or directly \texttt{[CLS]} embeddings yield suboptimal set-level representations~\citep{reimers2019sentence}. Principled alternatives include Pooling by Multihead Attention~\citep{lee2019set} for permutation-invariant set summarization and contrastive methods for refining \texttt{[CLS]} embeddings~\citep{gao2021simcse, chuang2022diffcse}, though contrastive objectives provide only implicit pressure for the token to summarize set contents. NEST instead explicitly supervises set-level representations through a set prediction objective (MSM, Section~\ref{sec:msm}), using the \texttt{[CLS]} token as the set-level readout.

\section{Method}
\subsection{Event Stream as a Sequence of Sets}

We model an event stream as a \textbf{time-ordered sequence of multisets}, where each multiset represents a collection of events that may be time-ordered (e.g., short ICU settings) or treated as unordered (e.g., long-term routine clinical visits). Formally:
\begin{equation}
    \big((\mathcal{X}_i, T_i)\big)_{i=1}^{m},~~
    \mathcal{X}_i= \big\{(\vc_i, T_i)\big\} \cup \big\{(\mathtt{x}_{ij},t_{ij})\big\}_{j=1}^{n}
\end{equation}
where $\mathcal{X}_i$ is the $i$-th event multiset (e.g., clinical visit), padded and truncated to fixed size $n$, and $m$ is the number of multisets. Each set includes a learnable \texttt{[CLS]} token $\vc_i$ that serves as the set-level readout. We assign $\vc_i$ the set timestamp $T_i$ so that the CSE can encode inter-set time ordering via RoPE (Section~\ref{sec:dual-contextualize}). The optional fine-grained timestamps $t_{ij}$ capture within-set temporal structure when available.

We define embedding notation used throughout. Let $[\cdot]$ denote column-wise concatenation. $X_i = [\{\vc_i\} \cup \{\vx_{ij}\}_{j=1}^{n}] \in \mathbb{R}^{\dmodel \times (n+1)}$ represents the token embeddings of $\mathcal{X}_i$. At the $l$-th NEST layer, $\vc_i^{(l)}$ and $Z_i^{(l)} = [\{\vz_{ij}^{(l)}\}]$ denote the contextualized \texttt{[CLS]} and non-\texttt{[CLS]} hidden states, respectively. We drop the superscript for final-layer outputs: $\vc_i \coloneqq \vc_i^{(L)}$, $Z_i \coloneqq Z_i^{(L)}$.

\subsection{Overview of NEST}

Each NEST layer consists of a Set-Wise Encoder (SWE) followed by a Cross-Set Encoder (CSE), interleaved layer-by-layer. Although prior work has empirically explored such interleaving for long documents~\citep{chalkidis2022hat}, the architectural consequences for sequence-of-multisets data and set-level pretraining have not been characterized. We provide an information-theoretic analysis in Section~\ref{sec:bypass} and simulation study in Section~\ref{sec:simulation}. Within each layer, the two encoders perform complementary roles: tokens attend only within their own multiset via SWE; cross-set information flows exclusively through the \texttt{[CLS]} tokens via CSE. Although SWE and CSE each apply dense attention internally, their composition over the full token set induces a sparse attention pattern (Figure~\ref{fig:nest}, right). \emph{Across the full stack, global context emerges through the \texttt{[CLS]}-mediated bridge: CSE injects cross-set context into \texttt{[CLS]}, and the subsequent SWE distributes it back to set members}. This design (i) improves computing efficiency through structural sparsity, and (ii) produces set-level representations via dual-level contextualization without post-hoc pooling.

\begin{figure}[t]
    \centering
    \includegraphics[width=0.7\linewidth]{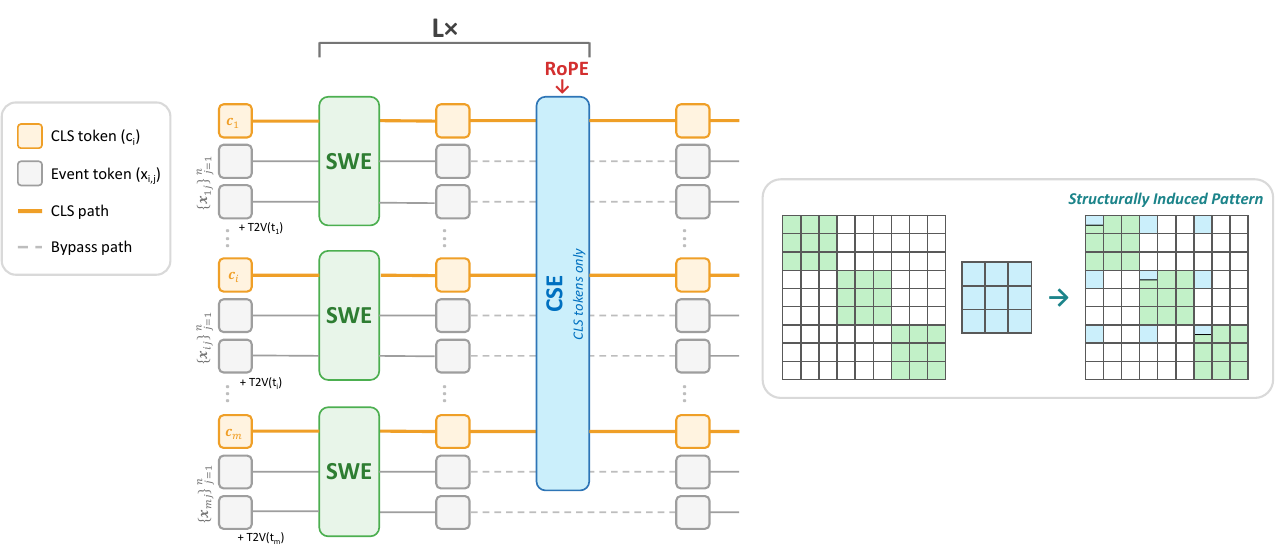}
    \caption{NEST models event stream data as a sequence of event multisets (\texttt{SeqSet}). At each of the $L$ layers, a SWE contextualizes tokens within each set, followed by a CSE that contextualizes \texttt{[CLS]} tokens across sets with RoPE-encoded inter-set time. The two encoders are interleaved. Non-\texttt{[CLS]} token hidden states are propagated through bypass paths so that within-set information remains accessible at every layer. Their composition yields a structurally induced sparse attention pattern.}
    \label{fig:nest}
\end{figure}

\subsection{Dual-level Contextualization}
\label{sec:dual-contextualize}
We use modern Transformer components~\citep{warner2025modernbert} in all SWE/CSE blocks. Within each multiset, interactions are restricted to in-set tokens, with the \texttt{[CLS]} aggregating set-level features via set-wise contextualization. Across the sequence, \texttt{[CLS]} communicate through cross-set contextualization.

\textbf{Set-wise}. The set-wise contextualization occurs within each set independently. Given the multiset $\mathcal{X}_i$,
\begin{align}
    \vz^{(l+1)}_{ij} &\leftarrow \vz^{(l)}_{ij} + \sum_a W_O^{(l,a)}V_i^{(l,a)}\text{Softmax}\left(\frac{1}{\sqrt{d_k}}K_{i}^{(l,a)\top} q_{ij}^{(l,a)}\right) \\
    \notag V_{i}^{(l,a)} &= W_V^{(l,a)}X_i^{(l)} =[\{\vv_{ij}^{(l,a)}\}_{j=1}^n] \\
    \notag K_{i}^{(l,a)} &= W_K^{(l,a)}X_i^{(l)} =[\{\vk_{ij}^{(l,a)}\}_{j=1}^n]
\end{align}
$V_i^{(l,a)}$ and $K_i^{(l,a)}$ are the layer- and head-specific value and key matrices for the tokens in $\mathcal{X}_i$, respectively, and $q_{ij}^{(l,a)} = W_Q^{(l,a)} \vz_{ij}^{(l)}$ is the layer- and head-specific query vector of token $j$. $W_O^{(l,a)} \in \mathbb{R}^{\dmodel \times d_k}$ and $W_Q^{(l,a)}, W_K^{(l,a)}, W_V^{(l,a)} \in \mathbb{R}^{d_k \times\dmodel}$ are the head-specific projection matrices in multi-head attention of SWE. SWE omits positional encoding by default, enforcing permutation invariance over intra-set tokens. When fine-grained timestamps are available (e.g., ICU settings), an optional intra-set RoPE can be applied to keys analogously to CSE, encoding within-set temporal order via $t_{ij}$. Inside each NEST block, SWE is shared across all multisets in the sequence.

\textbf{Cross-set}. The cross-set contextualization operates solely on \texttt{[CLS]} embeddings $\{\vc_i^{(l)}\}_{i=1}^{m}$. The hidden states of the other tokens bypass CSE.
\begin{align}
    \vc^{(l+1)}_{i} &\leftarrow \vc^{(l)}_{i} + \sum_a \tilde{W}_O^{(l,a)} \tilde{V}^{(l,a)} \, \text{Softmax}\left(\frac{1}{\sqrt{d_k}} \tilde{K}^{(l,a)\top} R(\pos_i) \tilde{q}_{i}^{(l,a)}\right) \\
    \notag \tilde{V}^{(l,a)} &= \tilde{W}_V^{(l,a)}[\{\vc_i^{(l)}\}_{i=1}^m] = \left[\{\tilde{\vv}_{i}^{(l,a)}\}_{i=1}^m\right] \\
    \notag \tilde{K}^{(l,a)} &= \text{RoPE}\!\left(\tilde{W}_K^{(l,a)}[\{\vc^{(l)}_{i}\}_{i=1}^m]\right) = \left[\{R(\pos_i) \tilde{\vk}_{i}^{(l,a)}\}_{i=1}^m\right]
\end{align}
where $\tilde{q}_{i}^{(l,a)} = \tilde{W}_Q^{(l,a)} \vc_i^{(l)}$ is the query vector for the \texttt{[CLS]} token in $\mathcal{X}_i$, $\tilde{W}_O^{(l,a)} \in \mathbb{R}^{\dmodel \times d_k}$ and $\tilde{W}_Q^{(l,a)}, \tilde{W}_K^{(l,a)}, \tilde{W}_V^{(l,a)} \in \mathbb{R}^{d_k \times\dmodel}$ are the head-specific projection matrices in the multi-head attention of CSE, and $R(\pos_i)$ is the rotary matrix for the position scalar $\pos_i$. By default, we set $\pos_i = i$ (set index); when inter-set timestamps are available and informative (e.g., calendar time between encounters), $\pos_i$ can instantiate as $T_i$ to encode temporal intervals directly.

The \texttt{[CLS]} tokens act as inter-set routers: collecting set-specific features in SWE, communicating across sets in CSE, and dispatching cross-set context back to non-\texttt{[CLS]} tokens in the subsequent SWE. This cycle repeats across layers, with $\{\vc_i\}_{i=1}^m = \{\vc_i^{(L)}\}_{i=1}^m$ serving as the canonical multiset representations for downstream tasks.

\subsection{NEST Information Bypass}
\label{sec:bypass}
A core question for any layered architecture is how much input information it preserves at the final layer. The Data Processing Inequality (DPI) provides an upper bound: along any Markov chain $X \to Z \to Y$ of deterministic processing, $I(X;Y) \le I(X;Z)$, so information discarded at any intermediate stage cannot be recovered downstream. We use this principle to compare two designs for combining SWE and CSE: a \emph{sequential} design that stacks all SWE blocks before all CSE blocks, and the \emph{interleaved} design adopted by NEST. Proposition~\ref{prop:bypass} shows that the sequential design irrevocably discards token-level information at the SWE-CSE phase boundary, whereas the interleaved design preserves this information through bypass paths. See Appendix~\ref{appx:prop-bypass} for detailed proof.

\begin{proposition}[Bypass Information Preservation]
\label{prop:bypass}
Let $X_i \in \mathbb{R}^{\dmodel \times (n+1)}$ denote the embedding matrix of multiset $\mathcal{X}_i$, decomposed as the \texttt{[CLS]} embedding $\vc_i$ and $n$ event-token embeddings $Z_i \in \mathbb{R}^{\dmodel \times n}$. The sequential design forms the Markov chain
$$
X_i \xrightarrow{\,(\mathrm{SWE})^{L}\,} (\bar{\vc}_i,\, \bar{Z}_i) 
\xrightarrow[\text{(operating on \texttt{[CLS]} only)}]{\,(\mathrm{CSE})^{L}\,} \vc_i^{\text{seq}},
$$
in which $\bar{Z}_i$ is dropped at the SWE\,$\to$\,CSE phase boundary; by DPI, this discarded information is unrecoverable. The interleaved design instead forms a bypass chain in which the non-\texttt{[CLS]} states are propagated unchanged through every CSE block:
$$
X_i \xrightarrow{\,\mathrm{SWE}_{1},\,\mathrm{CSE}_{1}\,} (\vc_i^{(1)}, Z_i^{(1)}) 
\xrightarrow{\,\mathrm{SWE}_{2},\,\mathrm{CSE}_{2}\,} \cdots 
\xrightarrow{\,\mathrm{SWE}_{L},\,\mathrm{CSE}_{L}\,} (\vc_i^{\text{int}},\, Z_i^{\text{int}}).
$$
Assuming Transformer blocks SWE and CSE are realized by continuous mappings and $n \ge 1$, the interleaved output retains strictly more information about $X_i$ than the sequential counterpart:
$$
I\!\left(X_i;\, (\vc_i^{\text{int}}, Z_i^{\text{int}})\right) \;>\; I(X_i;\, \vc_i^{\text{seq}}),
$$
and the gap is precisely the conditional information $I(X_i;\, Z_i^{\text{int}} \mid \vc_i^{\text{int}}) > 0$ that the bypass preserves.
\end{proposition}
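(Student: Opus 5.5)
The plan has three steps: decompose the interleaved mutual information with the chain rule, compare its \texttt{[CLS]} term against the sequential output via the Data Processing Inequality, and prove the remaining conditional term is strictly positive.

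\textbf{Chain-rule decomposition.} Write
\[
I\!\left(X_i;\,(\vc_i^{\text{int}},Z_i^{\text{int}})\right)=I(X_i;\vc_i^{\text{int}})+I(X_i;Z_i^{\text{int}}\mid\vc_i^{\text{int}}).
\]
The conditional term on the right is the quantity the statement names as the gap.

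\textbf{Sequential side.} Along the Markov chain $X_i\to(\bar{\vc}_i,\bar Z_i)\to\bar{\vc}_i\to\vc_i^{\text{seq}}$, DPI gives $I(X_i;\vc_i^{\text{seq}})\le I(X_i;\bar{\vc}_i)$. The claim that the gap is \emph{precisely} the conditional term requires matching the \texttt{[CLS]}-only channels, $I(X_i;\vc_i^{\text{seq}})\le I(X_i;\vc_i^{\text{int}})$. I would obtain this by a simulation argument: both designs have the same number of blocks with the same continuous block classes. The interleaved design with CSE$_1,\dots,$CSE$_{L-1}$ set to the identity (zero output projection $\tilde W_O$) and its SWE stack matching the sequential one reproduces the sequential \texttt{[CLS]} computation up to reordering. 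I would therefore state the comparison either under matched parameters, or as a comparison of the \texttt{[CLS]} channel's capacity and take $I(X_i;\vc_i^{\text{int}})\ge I(X_i;\vc_i^{\text{seq}})$ as the baseline. Subtracting then gives
\[
I\!\left(X_i;(\vc_i^{\text{int}},Z_i^{\text{int}})\right)-I(X_i;\vc_i^{\text{seq}})\ \ge\ I(X_i;Z_i^{\text{int}}\mid\vc_i^{\text{int}}),
\]
with equality when the \texttt{[CLS]} channels carry the same information.

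\textbf{Strict positivity (main obstacle).} It remains to show $I(X_i;Z_i^{\text{int}}\mid\vc_i^{\text{int}})>0$. Deterministic continuous maps of continuous inputs make mutual information infinite or ill-defined, so I would first regularize: either add small independent Gaussian noise to each block output (a standard device that makes DPI meaningful), or discretize $X_i$ to a finite-support distribution.

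Positivity then amounts to showing $Z_i^{\text{int}}$ is not almost surely a function of $\vc_i^{\text{int}}$ that renders it conditionally independent of $X_i$. The residual structure helps here. Each layer updates $\vz_{ij}^{(l+1)}=\vz_{ij}^{(l)}+(\text{attention term})$, and CSE leaves $Z$ untouched, so $Z_i^{\text{int}}=Z_i+g(X_i,\text{context})$ with $g$ continuous.

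The argument would use $n\ge1$ and the fact that $\vc_i^{\text{int}}\in\mathbb{R}^{\dmodel}$ while $(\vc_i,Z_i)\in\mathbb{R}^{\dmodel(n+1)}$. Fix a value of $\vc_i^{\text{int}}$. Its fiber in input space is generically of dimension at least $\dmodel n$, and along that fiber $Z_i^{\text{int}}$ varies nonconstantly. This is the step most likely to need an explicit genericity assumption, such as residual blocks with small Lipschitz constant so that $Z\mapsto Z+g$ is injective via the Banach fixed-point theorem.

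Given that, the conditional distribution of $Z_i^{\text{int}}$ given $\vc_i^{\text{int}}$ still depends on $X_i$ on a set of positive probability, which yields $I(X_i;Z_i^{\text{int}}\mid\vc_i^{\text{int}})>0$. Combined with the previous step, this gives the strict inequality.

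I expect this positivity step to be the crux. Continuity alone does not suffice (the maps could be degenerate), so I would make explicit that it holds under a generic or non-degenerate weight assumption together with the noise or discretization regularization.
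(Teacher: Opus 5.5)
Your plan follows the paper's proof step for step: chain rule, an expressivity embedding for the \texttt{[CLS]} term, and a dimension count for the bypass term. The caveats you add are real defects of that proof as well: mutual information is infinite for deterministic continuous maps, expressivity is not the same as fixed weights, residual blocks need a condition such as Lipschitz constant below $1$ to be injective, and the argument only gives ``$\ge$'' rather than ``precisely'' (the paper's final display also only gives $\ge$).

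Two of your steps still fail. The first is the embedding. Setting $\mathrm{CSE}_1,\dots,\mathrm{CSE}_{L-1}$ to the identity leaves $\mathrm{SWE}_1\cdots\mathrm{SWE}_L\,\mathrm{CSE}_L$, which is a \emph{single} cross-set block after the SWE stack. The sequential design instead applies $(\mathrm{CSE})^L$ to $\bar{\vc}_i$. ``Up to reordering'' does not repair this: the earlier CSE blocks run before $\bar{\vc}_i$ is formed, so in general they cannot contribute to the $L$ rounds of cross-set mixing applied to it. A faithful embedding needs $2L-1$ interleaved layers, with identity SWE blocks in layers $L+1,\dots,2L-1$, so it is not a matched-depth comparison. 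The paper's ``selectively activating CSE blocks'' has the same flaw. Even a valid embedding only compares hypothesis classes. For two given networks, $I(X_i;\vc_i^{\text{int}})\ge I(X_i;\vc_i^{\text{seq}})$ is therefore an extra hypothesis, not a lemma, and your fallback of taking it ``as the baseline'' amounts to assuming it.

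The second problem is that neither regularization you propose supports your positivity argument. Discretization is ruled out, because on a finite support the dimension count has no force. Nothing prevents $\vc_i^{\text{int}}$ from taking distinct values on the finitely many support points. Whenever it does, $\vc_i^{\text{int}}$ determines $Z_i^{\text{int}}$, so $I(X_i;Z_i^{\text{int}}\mid\vc_i^{\text{int}})=0$. If $\vc_i^{\text{seq}}$ also separates the support points, both sides equal $H(X_i)$ and the strict inequality is false.

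Additive Gaussian noise does make the mutual informations finite. But then your fiber argument (``fix a value of $\vc_i^{\text{int}}$; its fiber has dimension $\ge \dmodel n$'') no longer applies, since conditioning on a noisy output does not confine $X_i$ to a fiber. Positivity must instead be shown as failure of the Markov chain $X_i\to\vc_i^{\text{int}}\to Z_i^{\text{int}}$. That is a separate probabilistic argument and needs its own nondegeneracy condition.

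What your plan can actually establish is a conditional result. Assume noise at block outputs, the explicit hypothesis on the \texttt{[CLS]} channels, and such a nondegeneracy condition. Then the gap is \emph{at least} $I(X_i;Z_i^{\text{int}}\mid\vc_i^{\text{int}})>0$.
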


The information that the interleaved design preserves over the sequential design is precisely the residual token-level signal that does not fit into a single $\dmodel$-dimensional \texttt{[CLS]} summary. In sequential designs, all of $X_i$ must be compressed into the \texttt{[CLS]} embedding before any cross-set context arrives---a lossy compression by dimension counting (Appendix~\ref{appx:prop-bypass}, Step~2). The interleaved design defers this compression: $Z_i$ remains available at every layer for the next SWE block to revisit in light of cross-set context received via \texttt{[CLS]}. Section~\ref{sec:simulation} verifies this advantage in a controlled simulation, and Section~\ref{sec:mimic-iv-ablation} shows that removing CSE (an extreme sequential variant) consistently underperforms NEST across MIMIC-IV tasks.

\subsection{Dual-level Pretraining Procedure}
\label{sec:msm}

The interleaved architecture (Section~\ref{sec:bypass}) provides a channel for set-level information to flow through \texttt{[CLS]}, but does not enforce that \texttt{[CLS]} actually summarizes its set's contents. We close this gap with a pretraining objective that explicitly requires \texttt{[CLS]} to predict its multiset's token distribution, complementing the standard MLM objective.

\textbf{Masked Set Modeling (MSM).} Let $\mathcal{V}$ be the token vocabulary. We model each multiset $\mathcal{X}_i$ as conditionally drawn from a multinomial distribution $\text{Mult}(n;\, \pi_\theta)$ over $\mathcal{V}$, where the rate vector $\pi_\theta \in \Delta^{|\mathcal{V}|}$ depends on the surrounding event trajectory. This is a modeling simplification: it discards within-set co-occurrence structure but retains the dominant signal of which token types are present at what frequencies, which is what set-level downstream tasks consume. The MSM loss is the KL divergence between the empirical and predicted token distributions, with the predicted distribution read out from the final-layer \texttt{[CLS]} embedding $\vc_i$:
\begin{align}   
    \mathcal{L}_{\text{MSM}} &= \KL(\hat{P}_i \,\|\, \pi_\theta), \qquad \pi_\theta = \mathrm{MLP}(\vc_i), \\
    \notag &\mathrm{MLP}: \mathbb{R}^{\dmodel} \to \Delta^{|\mathcal{V}|},
\end{align}
where $\hat{P}_i$ is the empirical distribution from token-frequency counts in $\mathcal{X}_i$. We provide the full derivation from the multinomial likelihood in Appendix~\ref{appx:msm}.

\textbf{Complementarity with MLM.} MLM and MSM operate at distinct granularities and are computed in parallel during pretraining. MLM masks 20\% of individual tokens, supervising fine-grained token-level contextualization through SWE. MSM masks entire multisets with probability 40\%: within an MSM-masked set, all non-special tokens (including \texttt{[PAD]}, excluding \texttt{[CLS]}) are replaced with \texttt{[MASK]}. This full-set masking prevents information leakage from the MLM protocol~\citep{devlin2019bert, odgaard2024core}: if any non-\texttt{[CLS]} token were left visible within an MSM-masked set, SWE could reconstruct $\hat{P}_i$ locally without forcing \texttt{[CLS]} to engage cross-set context. The overall training loss is $\mathcal{L}_\text{MLM} + \mathcal{L}_\text{MSM}$. Together, these two objectives encourage \texttt{[CLS]} to aggregate set-level information through cross-set context, yielding embeddings $\{\vc_i\}$ that are directly usable for set-level downstream tasks without post-hoc pooling. Appendix~\ref{appx:msm-addon} ablates this design and confirms that MSM's contribution resides in the encoder representations rather than the task head.

\section{Experiments}
\subsection{NEST Computing Efficiency}
We compare NEST's computing efficiency with other Transformer architectures. For fairness, these counterparts are also updated using the modern recipe \citep{warner2025modernbert}. Table~\ref{tab:flops} provides the benchmark results of models' computing efficiency on randomly generated batches of \texttt{SeqSet} data. We use the same set of architectural hyperparameters for all tested models: $L = 6, \dmodel=768, d_h=2048, n_\text{heads}=12, d_{k}=64, |\mathcal{V}|=45\text{K}$, batch size of 16 and context size of 2048 ($n=32, m=64$).

All metrics are normalized to the token level. Despite a larger parameter count, NEST requires fewer FLOPs/token, achieves significantly higher throughput (paired t-test, $p < 0.05$), and consumes less peak memory. NEST's efficiency gains come from attention complexity reduction, not from being a smaller model. Moreover, unlike the approximate attention methods \citep{choromanski2020rethinking, xiong2021nystromformer} in the comparison, NEST computes \emph{exact} attention, so the efficiency gains reflect architectural effect rather than approximation error tradeoff. NEST uses Memory Efficient Attention (MEA) \citep{xFormers2022} for both SWE and CSE.

\begin{figure}[H]
\centering

\begin{minipage}[t]{0.72\linewidth}
    \vspace{0pt}
    \centering
    \captionof{table}{Benchmarking Transformer forward-pass efficiency on a single NVIDIA H200 GPU node. $N$ denotes the total context length. Longformer uses window size $w$ and $m$ global anchors; Performer uses random feature dimension size of $r = [d_k \log(d_k)]$; Nyströmformer uses $m$ landmarks; NEST uses $m$ sets with per-set size $n$.}
    \resizebox{\linewidth}{!}{
    \begin{tabular}{llcccc}
    \toprule
    model & Attn time cmplx & GFLOPs/tok $\downarrow$ & K tok/s $\uparrow$ & params (M) & Peak mem (GiB) $\downarrow$ \\
    \midrule
    BERT & $O(N^2)$ & 0.1918 & 
    $58.89 \pm 3.04$ & 77 & 52.8 \\
    $\text{BERT}_{\text{MEA}}$ \citep{xFormers2022} & $O(N^2)$ & 0.1918 &
    $65.25 \pm 4.58$ & 77 & 35.9 \\
    Longformer \citep{beltagy2020longformer} & $O(Nw + Nm)$ & 0.1714 &
    $69.11 \pm 5.05$ & 88 & 43.1 \\
    Performer \citep{choromanski2020rethinking} & $O(Nr)$ & 0.1639 &
    $77.86 \pm 5.19$ & 77 & 44.5 \\
    Nyströmformer \citep{xiong2021nystromformer} & $O(Nm)$ & \textbf{0.1575} &
    $88.79 \pm 7.26$ & 77 & 36.7 \\
    \midrule
    \textbf{NEST} & $O(Nn + m^2)$ & \textbf{0.1573} &
    $\boldsymbol{90.46 \pm 6.99}$ & 120 & \textbf{34.9} \\
    \bottomrule
    \end{tabular}
    }
    \label{tab:flops}
\end{minipage}
\hfill
\begin{minipage}[t]{0.25\linewidth}
    \vspace{0pt}
    \centering
    \includegraphics[width=\linewidth]{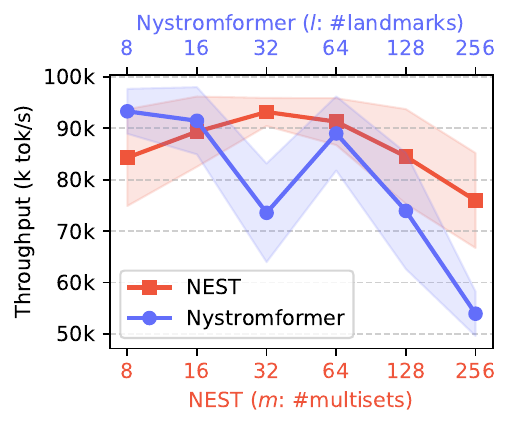}
    \vspace{-1.6em}
    \captionof{figure}{Nyströmformer suffers a sharper computational cliff at larger $l$.}
    \label{fig:nest-v-nystrom}
\end{minipage}

\end{figure}

\subsection{Advantages of NEST's Interleaving Design: A Simulation Study}
\label{sec:simulation}
Proposition \ref{prop:bypass} predicts that the interleaved design preserves strictly more information than the sequential design. However, it remains to show that this advantage translates to task performance and to characterize when it is most pronounced. Since real-world comparisons confound topology with other design choices, we construct a controlled simulation that isolates topology as the sole variable.

Clinical outcomes often depend on the interaction between individual events and a patient's broader longitudinal context — an elevated lab value may carry different clinical significance depending on the trajectory across prior visits. We define a synthetic outcome reflecting this structure: \begin{equation} y = \sum_{ij} \operatorname{softmax}_j\!\big(\langle \vx_{ij}, \boldsymbol{s}_i \rangle\big) \cdot \langle \vx_{ij}, \boldsymbol{s}_i \rangle, \end{equation} where $\boldsymbol{s}_i$ is the cross-set context summary for set $i$ and each token $\vx_{ij}$ combines a codebook content vector with Gaussian noise of controllable scale $\sigma^2$ (full Data Generation Process details in Appendix \ref{appx:simulation}). This outcome is more than set statistics alone, which requires token-level cross-set interaction.

We compare parameter-matched interleaved (SWE-CSE-$\cdots$) and sequential (SWE-$\cdots$-CSE-$\cdots$) architectures across noise scales and depths $L \in {2, 4}$. Figure \ref{fig:simulation-rlt} shows that the interleaved design consistently outperforms the sequential design, with the gap widening at greater depth, which supports Proposition \ref{prop:bypass}'s prediction that repeated bypass cycles compound the information advantage. The sequential design exhibits diminishing or even negative returns from added depth.

\begin{figure}[H]
    \centering
    \begin{subfigure}{0.495\linewidth}
        \centering
        \includegraphics[width=\linewidth]{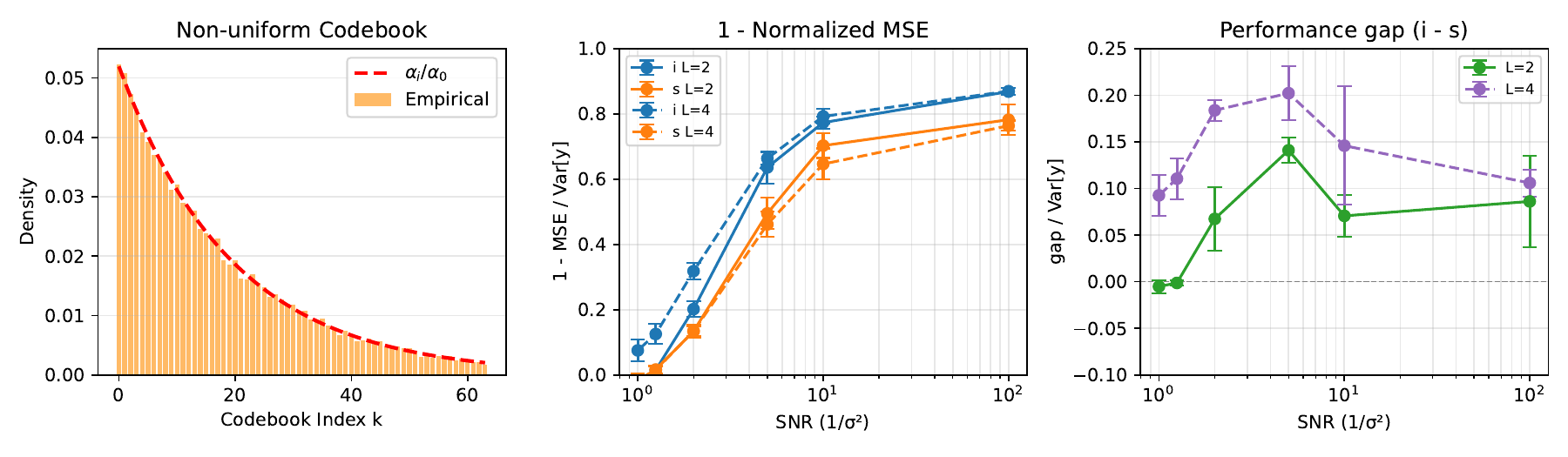}
        \vspace{-1.8em}
        \caption{Non-uniform token distribution}
    \end{subfigure}
    \hfill
    \begin{subfigure}{0.495\linewidth}
        \centering
        \includegraphics[width=\linewidth]{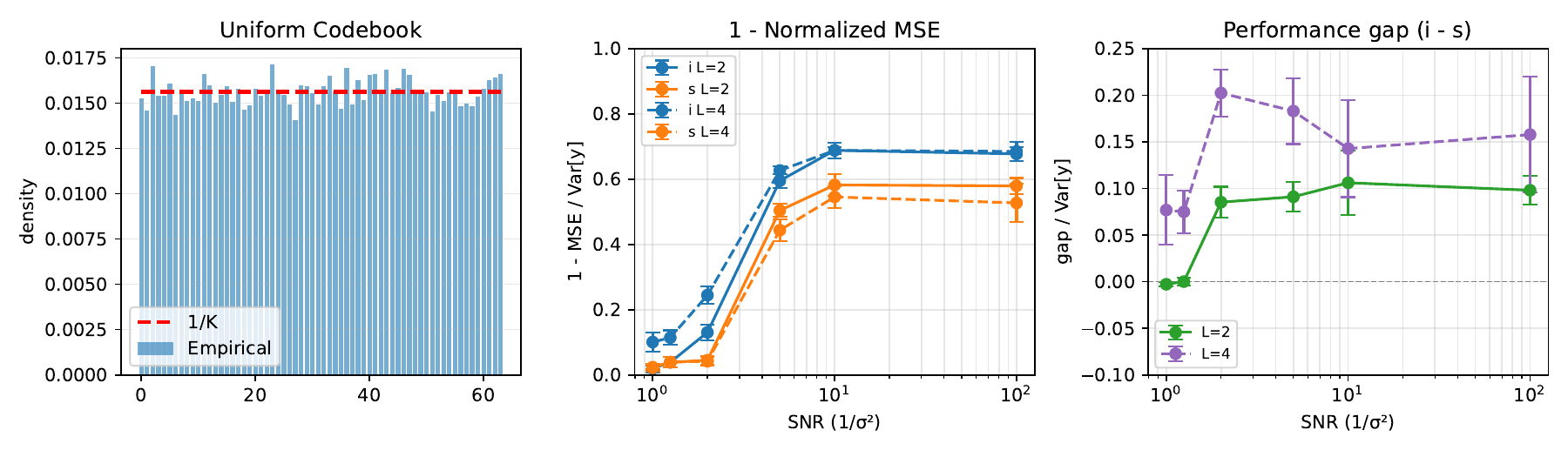}
        \vspace{-1.8em}
        \caption{Uniform token distribution}
    \end{subfigure}
    
    \caption{Comparing interleaved design with the ad-hoc sequential design.}
    \label{fig:simulation-rlt}
\end{figure}

\subsection{Real-World Data Tasks}
To ensure the reproducibility of our study, we conduct experiments on three real-world datasets, including proprietary EHR data from our institution and two publicly available datasets: MIMIC-IV-hosp and Instacart. Table~\ref{tab:data} summarizes dataset statistics.

\begin{wraptable}{r}{0.46\linewidth}
    \vspace{-0.8em}
    \caption{Real world datasets' statistics.}
    \label{tab:data}
    \centering
    \resizebox{\linewidth}{!}{
    \begin{tabular}{lcccc}
    \toprule
    Dataset     & within-set RoPE & \#subject & \#sets & \#tokens\\
    \midrule
    Instacart     & off & 206K  & 3.42M & 33.8M \\
    MIMIC-IV hosp   & on & 223K  & 546K  & 122M  \\
    Proprietary EHR & off & 500K  & 7.34M & 70.9M \\
    \bottomrule
    \end{tabular}
    }
    \vspace{-1em}
\end{wraptable}

\textbf{Instacart.}\quad
This dataset is open sourced by Instacart\footnote{While the official source is no longer directly hosted, a copy of the dataset is available on Kaggle: \textit{instacart-online-grocery-basket-analysis-dataset}.}. The dataset consists of online grocery transactions used to develop model for Next Basket Recommendation (NBR). Orders are treated as baskets containing unordered and equally important items \citep{li2023next}, so we turn off the within-set RoPE.

\textbf{MIMIC-IV \texttt{hosp}.}\quad
MIMIC-IV is a public dataset comprising de-identified electronic health records from Beth Israel Deaconess Medical Center \citep{johnson2023mimic}. We use version 3.1, which spans admissions from 2008 to 2022. We extract data from the \texttt{hosp} module across six source tables: \texttt{admissions}, \texttt{diagnoses\_icd}, \texttt{procedures\_icd}, \texttt{labevents}, \texttt{emar}, and \texttt{prescriptions}. The processed dataset contains 546K hospital admissions from 223K patients. We adopt the ETHOS's \citep{renc2024ethos} tokenization protocol with slight adjustments for the MIMIC-IV clinical event streams. 

\textbf{Proprietary institutional EHR.}\quad
We developed an ETL pipeline that transforms heterogeneous tables from our institutional EHR database into a unified event-stream format. We collected medical event stream data from approximately 0.5M patients born between 1999 and 2024 and filtered events occurring between ages 0 and 18. Modeling the long-term patient timelines, we ignore the order of events within each encounter. We assign patients born on the 17th–24th of each month to the test set and the remainder to the training and validation splits.

\subsubsection{Next-Basket Recommendation} \label{sec:instacart}

Given a user's purchase history $\big((\mathcal{X}_i, T_i)\big)_{i=1}^{m}$, next basket recommendation (NBR) predicts the set of items $\mathcal{X}_{m+1}$ the user will purchase next. Instacart exhibits a high repeated purchase rate~\citep{li2023next}, making repeat-aware methods, such as Sets2Sets~\citep{hu2019sets2sets}, TIFUKNN~\citep{hu2020tifuknn}, and ReCANet~\citep{ariannezhad2022recanet}, particularly effective. We use this task to evaluate whether NEST's pretrained set-level representations transfer to downstream applications.

\begin{wraptable}{r}{0.3\linewidth}
    \vspace{-0.8em}
    \caption{Comparison on NBR.}
    \label{tab:nbr}
    \centering
    \resizebox{\linewidth}{!}{
    \begin{tabular}{lcc}
    \toprule
    Model & Recall@10 & NDCG@10 \\
    \midrule

    \multicolumn{3}{l}{\textit{Repeat-aware}} \\
    Sets2Sets \citep{hu2019sets2sets}   & 21.25 & 19.23 \\
    TIFUKNN \citep{hu2020tifuknn}       & 34.97 & 35.70 \\
    ReCANet \citep{ariannezhad2022recanet} & \textbf{35.41} & 36.01 \\
    \midrule

    \multicolumn{3}{l}{\textit{Pretrained FM}} \\
    BERT4Rec \citep{sun2019bert4rec, li2023btbr} & 24.83 & 30.42 \\
    NEST & 25.96 & 35.95 \\
    \midrule

    \multicolumn{3}{l}{\textit{FM + Repeat-aware}} \\
    BERT + ReCANet & 34.60 & 42.74 \\
    NEST + ReCANet & 34.77 & $\mathbf{43.01}$ \\
    
    \bottomrule
    \end{tabular}
    }
    \vspace{-1em}
\end{wraptable}

We consider two settings. First, following the BTBR inference protocol~\citep{li2023btbr}, we append a query basket $\mathcal{X}_{m+1} = \{\texttt{[MASK]}\}$ and rank items by logit at the mask position (no fine-tuning). Second, we replace ReCANet's randomly initialized embeddings with NEST's pretrained representations: NEST's embedding layer serves as the item embedding, and NEST's contextualized \texttt{[CLS]} representations serve as the user embedding, evaluating their quality as a feature backbone for a task-specific model.

Table~\ref{tab:nbr} shows that NEST alone is competitive with BERT4Rec but underperforms state-of-the-art repeat-aware baselines, as expected given that its pretraining objective does not explicitly model repeat purchases. However, NEST + ReCANet outperforms all baselines on NDCG@10, indicating that NEST's pretrained representations capture richer user-item dynamics than random initialization.

\subsubsection{MIMIC-IV}

We compare NEST against five baselines covering major EHR FM paradigms: Hi-BEHRT~\citep{li2022hibehrt}, CEHR-BERT~\citep{pang2021cehr}, MOTOR~\citep{steinberg2023motor}, CORE-BEHRT~\citep{odgaard2024core}, and GT-BEHRT~\citep{poulain2024graph}, all built with matched architecture hyperparameters for fair comparison. We originally planned to include HEART~\citep{huang2024heart}, but its $O(\sum_i n_i^2)$ pairwise edge embeddings caused frequent OOM failures during training, so we excluded it from the comparison. The evaluation covers five downstream tasks: inpatient mortality prediction, 30-day readmission, prolonged length of stay, ICD chapter classification, and ICD category multi-label classification. Detailed task definitions are provided in Appendix~\ref{appx:downstream}.

\begin{table}[H]
    \centering
    \small
    \caption{MIMIC-IV finetuned downstream performance (mean $\pm$ SD across 5 random seeds).}
    \begin{subtable}[t]{\linewidth}
        \centering
        \caption{AUROC}
        \resizebox{0.8\linewidth}{!}{
        \begin{tabular}{lccccc}
        \toprule
        Model & Mortality & Readmission & Length of Stay & ICD Chapter & ICD Category \\
        \midrule
        Hi-BEHRT & $89.00 \pm 0.53$ & $68.79 \pm 0.30$ & $82.03 \pm 0.42$ & $89.18 \pm 1.46$ & $88.64 \pm 0.26$ \\
        MOTOR & $90.60 \pm 0.47$ & $69.05 \pm 0.55$ & $85.62 \pm 0.81$ & $91.42 \pm 0.71$ & $92.35 \pm 0.16$ \\
        CEHR-BERT & $93.09 \pm 0.20$ & $70.59 \pm 0.38$ & $85.23 \pm 0.10$ & $93.03 \pm 0.56$ & $92.03 \pm 0.25$ \\
        CORE-BEHRT & $92.96 \pm 0.36$ & $71.02 \pm 0.20$ & $86.23 \pm 0.15$ & $93.51 \pm 0.43$ & $92.25 \pm 0.05$ \\
        GT-BEHRT & $93.80 \pm 0.11$ & $70.82 \pm 0.31$ & $84.84 \pm 0.15$ & $91.31 \pm 0.94$ & $91.98 \pm 0.05$ \\
        \midrule
        NEST$_\text{MLM Only}$ (Ours) & $95.28 \pm 0.09$ & $72.27 \pm 0.36$ & $88.50 \pm 0.08$ & $94.81 \pm 0.64$ & $92.59 \pm 0.25$ \\
        NEST (Ours) & $\mathbf{95.57 \pm 0.10}$ & $\mathbf{72.53 \pm 0.35}$ & $\mathbf{88.79 \pm 0.14}$ & $\mathbf{95.29 \pm 0.46}$ & $\mathbf{93.51 \pm 0.19}$ \\
        \bottomrule
        \end{tabular}
        }
    \end{subtable}
    \begin{subtable}[t]{0.8\linewidth}
        \centering
        \caption{AP}
        \resizebox{\linewidth}{!}{
        \begin{tabular}{lccccc}
        \toprule
        Model & Mortality & Readmission & Length of Stay & ICD Chapter & ICD Category \\
        \midrule
        Hi-BEHRT & $32.64 \pm 0.94$ & $49.31 \pm 0.34$ & $48.50 \pm 0.68$ & $43.93 \pm 3.85$ & $10.81 \pm 0.17$ \\
        MOTOR & $36.67 \pm 1.46$ & $53.87 \pm 0.61$ & $58.28 \pm 0.59$ & $49.05 \pm 1.30$ & $15.24 \pm 0.18$ \\
        CEHR-BERT & $42.52 \pm 0.74$ & $55.62 \pm 0.57$ & $57.56 \pm 0.21$ & $52.95 \pm 2.71$ & $15.49 \pm 0.11$ \\
        CORE-BEHRT & $45.12 \pm 2.16$ & $56.10 \pm 0.51$ & $58.88 \pm 0.39$ & $55.85 \pm 2.00$ & $15.53 \pm 0.26$ \\
        GT-BEHRT & $41.58 \pm 1.00$ & $56.39 \pm 0.52$ & $55.87 \pm 0.17$ & $53.73 \pm 1.42$ & $12.58 \pm 0.23$ \\
        \midrule
        NEST$_\text{MLM only}$ (Ours) & $51.19 \pm 1.50$ & $58.32 \pm 0.44$ & $64.04 \pm 0.22$ & $60.18 \pm 1.96$ & $16.38 \pm 0.12$ \\
        NEST (Ours) & $\mathbf{52.75 \pm 1.08}$ & $\mathbf{58.68 \pm 0.53}$ & $\mathbf{64.58 \pm 0.49}$ & $\mathbf{60.85 \pm 1.90}$ & $\mathbf{16.92 \pm 0.15}$ \\
        \bottomrule
        \end{tabular}
        }
    \end{subtable}

\end{table}

\subsubsection{Proprietary EHR}

In downstream tasks, we obtain patient representation by encoding the truncated prefix $\big((\mathcal{X}_i, T_i)\big)_{\le a}$ to predict targeted clinical events occurring after the discharge time $T_a$. This calculation examines FMs' performances on preventive risk prediction.

We curated three binary classification cohorts: (i) Recurrent Acute Otitis Media (rAOM) ($22\%$), (ii) Autism Spectrum Disorder (ASD) ($2.1\%$), (iii) Home Health Care 30-day readmission ($20\%$). More details of the data pipeline and cohorts' clinical significance are provided in Appendix~\ref{appx:proprietary-ehr}. These tasks focus on pediatric cohorts, aiming to support in-house clinical decision-making for preventive care. We benchmark our model's performance on these tasks against count-based LightGBM \citep{ke2017lightgbm, wornow2023ehrshot} and CORE-BEHRT \cite{odgaard2024core}. 

In Table \ref{tab:private-ehr-ds}, for foundation models, we report performance under both nonlinear probing (with the backbone frozen) and full fine-tuning (with the backbone adapted to the downstream task).

\begin{table}[H]
    \caption{Performance on proprietary EHR downstream tasks.}
    \label{tab:private-ehr-ds}
    \centering
    \resizebox{0.65\linewidth}{!}{
    \begin{tabular}{lcccccc}
    \toprule
    Model 
    & \multicolumn{2}{c}{rAOM}
    & \multicolumn{2}{c}{ASD}
    & \multicolumn{2}{c}{30d Readmit} \\
    \cmidrule(lr){2-3}
    \cmidrule(lr){4-5}
    \cmidrule(lr){6-7}
    & AUROC & AP & AUROC & AP & AUROC & AP \\
    \midrule
    LightGBM 
        & 71.7 & 40.1 & 73.6 & 6.0 & 73.8 & 46.6 \\
    CORE-BEHRT$_{\texttt{pb}}$
        & 72.1 & 39.6 & 70.3 & 9.8 & 73.8 & 46.8 \\
    CORE-BEHRT$_{\texttt{ft}}$
        & 73.0 & \textbf{43.9} & 73.8 & 10.2 & 73.8 & 46.8 \\
    \midrule
    \text{NEST}$_{\texttt{pb}}$ 
        & 73.7 & 42.6 & 71.1 & 7.6 & 73.4 & 45.7 \\
    \text{NEST}$_{\texttt{ft}}$ 
        & \textbf{73.8} & 43.2 & \textbf{74.2} & \textbf{10.9} & \textbf{74.1} & \textbf{47.5} \\
    \bottomrule
    \end{tabular}
    }
\end{table}

\section{Ablation Studies}
\label{sec:mimic-iv-ablation}

We probe NEST's design through two ablation regimes on three MIMIC-IV tasks (mortality, 30-day readmission, prolonged length of stay), averaged over 5 random seeds. \textbf{Group B (architectural)} weakens internal components and re-pretrains from scratch; \textbf{Group A (behavioral)} keeps the NEST (MLM+MSM) checkpoint fixed and intervenes only at fine-tuning. Full setup is in Appendix~\ref{appx:ablation-setup}. Table~\ref{tab:ablation} summarizes the results.

\begin{table}[H]
    \centering
    \small
    \caption{Ablation study on MIMIC-IV (mean $\pm$ SD across 5 random seeds; AUROC / AP, in \%). Architectural ablations (Group B) require re-pretraining; behavioral ablations (Group A) reuse the NEST (MLM+MSM) checkpoint and only modify fine-tuning. Variants marked with $\dagger$ are parameter-matched to NEST ($\sim$120M).}
    \label{tab:ablation}
    \resizebox{\linewidth}{!}{
    \begin{tabular}{llcccccc}
    \toprule
    & & \multicolumn{2}{c}{Mortality} & \multicolumn{2}{c}{Readmission} & \multicolumn{2}{c}{Length of Stay} \\
    \cmidrule(lr){3-4} \cmidrule(lr){5-6} \cmidrule(lr){7-8}
    & Variant & AUROC & AP & AUROC & AP & AUROC & AP \\
    \midrule
    \multirow{4}{*}{\shortstack[l]{Group B\\(re-pretrained)}}
        & SWE\,$\to$\,MeanPool (6L)                & $92.18 \pm 0.39$ & $41.96 \pm 0.80$ & $68.56 \pm 0.26$ & $53.24 \pm 0.88$ & $85.38 \pm 0.18$ & $56.13 \pm 0.44$ \\
        & $-$\,CSE (6L SWE-only)                   & $92.85 \pm 0.15$ & $47.32 \pm 1.00$ & $70.37 \pm 0.26$ & $56.05 \pm 0.23$ & $86.24 \pm 0.11$ & $60.86 \pm 0.34$ \\
        & SWE\,$\to$\,MeanPool (12L)$^\dagger$     & $93.05 \pm 0.31$ & $45.58 \pm 0.89$ & $69.26 \pm 0.23$ & $55.30 \pm 0.26$ & $86.84 \pm 0.23$ & $59.22 \pm 0.75$ \\
        & $-$\,CSE (12L SWE-only)$^\dagger$        & $94.52 \pm 0.19$ & $48.84 \pm 1.17$ & $71.45 \pm 0.24$ & $57.33 \pm 0.32$ & $87.71 \pm 0.10$ & $62.35 \pm 0.25$ \\
    \midrule
    \multirow{5}{*}{\shortstack[l]{Group A\\(fine-tune only)}}
        & Linear probe                             & $94.33 \pm 0.18$ & $46.65 \pm 1.07$ & $70.40 \pm 0.20$ & $55.60 \pm 0.39$ & $86.10 \pm 0.20$ & $60.30 \pm 0.35$ \\
        & Temporal shuffle                         & $94.47 \pm 0.20$ & $48.98 \pm 0.79$ & $70.78 \pm 0.29$ & $56.16 \pm 0.43$ & $87.65 \pm 0.05$ & $62.41 \pm 0.40$ \\
        & \texttt{[CLS]} $\to$ mean pool           & $94.59 \pm 0.23$ & $48.18 \pm 0.85$ & $71.93 \pm 0.16$ & $57.87 \pm 0.37$ & $88.38 \pm 0.13$ & $62.95 \pm 0.42$ \\
        & Freeze SWE                               & $94.43 \pm 0.24$ & $47.98 \pm 1.14$ & $71.25 \pm 0.22$ & $57.09 \pm 0.48$ & $87.73 \pm 0.15$ & $61.71 \pm 0.43$ \\
        & Freeze CSE                               & $94.99 \pm 0.21$ & $50.53 \pm 1.15$ & $71.91 \pm 0.18$ & $57.99 \pm 0.34$ & $88.21 \pm 0.14$ & $63.45 \pm 0.54$ \\
    \midrule
    & \textbf{NEST (full)}                         & $\mathbf{95.57 \pm 0.10}$ & $\mathbf{52.75 \pm 1.08}$ & $\mathbf{72.53 \pm 0.35}$ & $\mathbf{58.68 \pm 0.53}$ & $\mathbf{88.79 \pm 0.14}$ & $\mathbf{64.58 \pm 0.49}$ \\
    \bottomrule
    \end{tabular}
    }
\end{table}

Both encoders are essential, and depth cannot substitute for either. CSE's contribution comes from temporal structure, not from cross-set attention alone. The \texttt{[CLS]} pathway carries set-level information that mean pooling cannot recover. SWE representations require more task-specific adaptation than CSE representations. Together, these ablations show that NEST's gains arise from its architectural commitments rather than capacity or pretraining artifacts. See Appendix~\ref{appx:mimic-abl} for detailed discussion.

\section{Conclusion}
We introduced NEST, a hierarchical Transformer for sequence-of-multiset (\texttt{SeqSet}) event streams. By interleaving within-set (SWE) and cross-set (CSE) encoders, NEST preserves multiset boundaries and induces structured sparsity, achieving higher throughput than BERT-style baselines while exposing a token-level bypass path that prevents the information loss inherent in sequential SWE-then-CSE designs (Proposition~\ref{prop:bypass}). Masked Set Modeling adds explicit set-level supervision, yielding directly usable \texttt{[CLS]} embeddings without post-hoc pooling. Across MIMIC-IV, a proprietary pediatric EHR, and Instacart NBR, NEST consistently outperforms strong baselines; ablations confirm that gains arise from the interleaved architecture and MSM rather than capacity. NEST is a step toward foundation models that respect, rather than flatten, the structure of real-world event streams.

\textbf{Limitations.} NEST is scoped to representation-learning foundation models, since our goal is patient representation quality rather than benchmarking all model families. Generative trajectory models~\citep{renc2024ethos, mcdermott2023event} optimize a different objective and require a separate evaluation protocol, which we leave to future work. We report discriminative metrics (AUROC/AP) but do not assess calibration, which is essential for clinical risk estimation. Although SeqSet extends beyond healthcare, our cross-domain evidence is limited to a single non-EHR dataset (Instacart); broader validation across event-stream domains (e.g., financial transactions, user-behavior logs) remains an important direction.

\textbf{Broader Impact.} The use of ML in healthcare carries significant ethical responsibilities. To ensure data privacy, we exclusively used de-identified MIMIC-IV data accessed via PhysioNet credentialing and IRB-approved proprietary EHRs within secure environments. We acknowledge the risk of algorithmic bias, particularly for under-documented populations due to historical EHR disparities; rigorous fairness evaluations across demographic subgroups are essential prior to any real-world deployment. To mitigate risks from uncertainty and overconfidence, our model is strictly designed as an assistive Clinical Decision Support tool requiring human-in-the-loop oversight. Future deployment would benefit from interpretability analyses such as attention visualization to support clinician trust.

\newpage
{
\small
\bibliographystyle{plainnat}

}

\newpage
\appendix
\section{Proposition 1: Bypass Information Preservation}
\label{appx:prop-bypass}

\begin{figure}[H]
    \centering
    \includegraphics[width=0.5\linewidth]{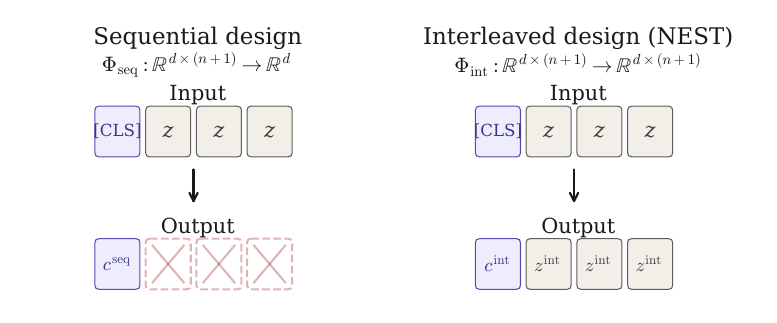}
    \caption{Interleave design preserve token-level information in each NEST layer.}
    \label{fig:i-vs-s}
\end{figure}

\begin{proof}
We prove the inequality in two steps.

\emph{Step 1 (chain rule of mutual information).} 
$$
I\!\left(X_i;\, (\vc_i^{\text{int}}, Z_i^{\text{int}})\right) 
\;=\; I(X_i;\, \vc_i^{\text{int}}) \;+\; I(X_i;\, Z_i^{\text{int}} \mid \vc_i^{\text{int}}).
$$
The interleaved design subsumes the sequential one in expressive capacity at matched parameter and depth budgets: fixing all cross-set attention weights to zero in interleaved CSE blocks recovers a pure SWE pipeline, after which selectively activating CSE blocks reproduces the sequential output. Hence $I(X_i;\, \vc_i^{\text{int}}) \ge I(X_i;\, \vc_i^{\text{seq}})$.

\emph{Step 2 (strict bypass gap).} 
Suppose, for contradiction, that $I(X_i;\, Z_i^{\text{int}} \mid \vc_i^{\text{int}}) = 0$. Then $Z_i^{\text{int}}$ is determined by $\vc_i^{\text{int}}$ almost surely: there exists a measurable map $\phi: \mathbb{R}^{\dmodel} \to \mathbb{R}^{\dmodel \times n}$ such that $Z_i^{\text{int}} = \phi(\vc_i^{\text{int}})$. The image of the interleaved encoding $\Psi: X_i \mapsto (\vc_i^{\text{int}}, Z_i^{\text{int}})$ would therefore lie in the graph $\{(\vc, \phi(\vc)) : \vc \in \mathbb{R}^{\dmodel}\} \subset \mathbb{R}^{\dmodel \times (n+1)}$, a subset of topological dimension at most $\dmodel$. But the input domain $\mathbb{R}^{\dmodel \times (n+1)}$ has dimension $\dmodel(n+1) > \dmodel$ when $n \ge 1$, so by the topological invariance of dimension, $\Psi$ would have to collapse a positive-dimensional set of inputs onto its image---a degeneracy precluded by standard Transformer blocks, whose residual connections prevent global volume collapse. Hence $I(X_i;\, Z_i^{\text{int}} \mid \vc_i^{\text{int}}) > 0$.

Combining the two steps,
$$
I\!\left(X_i;\, (\vc_i^{\text{int}}, Z_i^{\text{int}})\right) \;\ge\; I(X_i;\, \vc_i^{\text{seq}}) \;+\; I(X_i;\, Z_i^{\text{int}} \mid \vc_i^{\text{int}}) \;>\; I(X_i;\, \vc_i^{\text{seq}}).
$$
\end{proof}

\section{MSM}
\label{appx:msm}
\subsection{Derivation}
\newcommand{\argmax}{\text{argmax}}
\newcommand{\argmin}{\text{argmin}}

Consider a family of probability measures $\{\pi_\theta : \theta\in\Theta\}$ on the finite space $\Omega=\{\omega_1,\ldots,\omega_k\}$. The KL divergence of $\pi_\theta$ from a fixed measure $P$ decomposes as $$ \KL[P; \pi_\theta] = -H(P) - \sum_{i=1}^k P[\omega_i]\log \pi_\theta[\omega_i], $$ so since $H(P)$ is constant in $\theta$, $$ \argmin_{\theta\in\Theta}\KL[P;\pi_\theta] = \argmax_{\theta\in\Theta}\sum_{i=1}^k P[\omega_i]\log \pi_\theta[\omega_i]. $$ That is, minimizing KL divergence is equivalent to maximizing the negative cross-entropy.

Applying this to the multinomial: let $\mathbf{x}=(x_1,\ldots,x_k)$ be observed counts under $\mathbf{X}\sim\mathrm{Multinomial}(n,\pi_\theta)$, and define the empirical measure $\hat{P}_i = x_i/n$. Since the log-likelihood satisfies $$ \ell(\theta;\mathbf{x}) = C + \sum_{i=1}^k x_i\log\theta_i, \quad C \text{ free of } \theta, $$ we have $$ \argmax_{\theta\in\Theta}\ell(\theta;\mathbf{x}) = \argmax_{\theta\in\Theta}\sum_{i=1}^k \hat{P}_i\log\pi_\theta(i) = \argmin_{\theta\in\Theta}\KL[\hat{P}; \pi_\theta]. $$ Hence MLE is equivalent to minimizing the KL divergence from the empirical measure to the model family.

\subsection{Forced Cross-Set Aggregation}

The MSM masking protocol (Section~\ref{sec:msm}) forces \texttt{[CLS]} to encode set-level structure as a direct consequence of the objective, rather than as an emergent side-effect of self-attention. Concretely, with every intra-set token masked under MSM, no within-set signal is available to reconstruct $\hat{P}_i$. The \texttt{[CLS]} token must instead aggregate cross-set context through CSE---inferring what tokens belong to a clinical encounter from the patient's broader trajectory. As a result, NEST's \texttt{[CLS]} embeddings $\{\vc_i\}$ are usable directly for set-level downstream tasks without post-hoc pooling, distinguishing NEST from contrastive methods~\citep{gao2021simcse, chuang2022diffcse} that provide only implicit pressure for the token to summarize set contents.

\subsection{Ablation Study: What Does MSM Provide?}
\label{appx:msm-addon}
We ablate the MSM objective to understand its contribution to NEST pretraining. Specifically, we ask: (1) Does MSM improve set-level prediction? (2) Does the learned representation transfer beyond the task-specific head?

We pretrain three small (3-layer) NEST variants on the proprietary EHR data for 10 epochs: MSM-only, MLM-only, and MLM+MSM. To evaluate whether set-level knowledge resides in the encoder rather than the MSM task head, we measure set prediction performance using a weight-tied linear layer~\citep{press2017weighttying} instead of the MSM task head used in the pretraining.

\begin{table}[H]
    \caption{Set prediction performance (Recall@10, NDCG@10) on randomly masked encounters.}
    \label{tab:mlm-vs-msm}
    \centering
    \begin{tabular}{lcc}
    \toprule
    Model & Recall@10 & NDCG@10 \\
    \midrule
    MSM-only  & $<$0.01 & $<$0.01 \\
    MLM-only  & 47.6 & 51.6 \\
    MLM+MSM   & \textbf{48.9} {\small($\uparrow$2.7\%)} & \textbf{54.9} {\small($\uparrow$6.4\%)} \\
    \bottomrule
    \end{tabular}
\vskip -0.1in
\end{table}

MSM-only pretraining fails entirely: the task head learns to predict sets, but the encoder representations do not transfer. MLM-only provides a strong baseline, but combining MLM with MSM yields consistent improvements across both metrics. This confirms that the two objectives are complementary: MLM drives token-level contextualization, while MSM encourages the \texttt{[CLS]} embedding to capture set-level information in a transferable manner.

\section{Multiset Prediction}
\label{appx:set-pred}
During pretraining, we employ a separate MSM task head for set prediction. However, this setup may overestimate representation quality, as set-level information can be largely absorbed by the MSM-task head while the FM backbone remains insufficiently optimized—in other words, the MSM-task head may perform most of the heavy lifting. To mitigate this issue, we adopt a more conservative strategy when evaluating FM representation quality by computing the missing tokens for a masked set without relying on the MSM head. Specifically, analogous to MLM, we pass the output representation through the shared, weight-tied embedding projection to compute item logits, then select the top-K scoring items to form the predicted basket.

As the predicted set is formed from the top-K predicted token logits, we evaluate set prediction using Recall@K and NDCG@K (normalized discounted cumulative gain). Recall@K measures the fraction of ground-truth basket items recovered within the top-K predictions, while NDCG@K rewards models that rank ground-truth items more confidently toward the top of that list of logits.
\begin{align}
    \text{Recall}@K(u) &= \frac{\left|T_{u} \cap P_{u} \right|}{\left|T_{u} \right|} \\
    \text{NDCG}@K(u) &= \frac{\sum_{k=1}^K p_k / \log_2(k+1)}{\sum_{k=1}^{\min\left(K, \left|T_{u}\right|\right)} 1/\log_2(k+1)}
\end{align}
where $T_u$ and $P_u$ are the ground truth set (which need not be in a size of $K$) and predicted set with $K$ recommended items for subject $u$ and $p_k = \mathbb{I}\{P_{u}^{(k)} \in T_{u}\}$. Recall$@K$ quantifies model's ability to find the relevant items, and NDCG$@K$ measures model's ability to push all correct tokens to the front of the recalled tokens.

\section{Simulation Studies}
\subsection{Computing Efficiency}

We calculate the throughput over 100 random batches across five random seeds. We reported pooled standard errors.
$$
\sigma_{\text{pooled}} = \sqrt{\frac{1}{n}\sum_{i=1}^{n} s_i^2 + \text{Var}(\bar{x}_i)}
$$

\subsection{Interleave}
\label{appx:simulation}
Let $\mathcal{C} \in \mathbb{R}^{K\times \dmodel}$ be a token content codebook with each row drawn i.i.d.\ from $\mathcal{N}(0,I_{\dmodel})$. For each set $i$, draw $\boldsymbol{\pi}^{(i)} \sim \mathrm{Dir}(\boldsymbol{\alpha})$, sample $k_{ij}|\boldsymbol{\pi} \overset{\text{i.i.d.}}{\sim} \mathrm{Cat}(\boldsymbol{\pi}^{(i)})$, and set
\[
\vx_{ij} = \mathcal{C}[k_{ij}] + \boldsymbol{\epsilon}_{ij}, \quad \boldsymbol{\epsilon}_{ij} \sim \mathcal{N}(0, \sigma^2 I_{d_{\text{model}}}),
\]
so each token combines codebook content with Gaussian noise. Then, we define the cross-set summary and synergy-dependent outcome as follows:
\begin{equation}
\boldsymbol{s}_i = \frac{1}{(T-1)\sqrt{n}} \sum_{i' \ne i} \sum_j \vx_{i'j},
\qquad
y = \sum_{ij} \operatorname{softmax}_j\!\big(\langle \vx_{ij}, \boldsymbol{s}_i \rangle\big) \cdot \langle \vx_{ij}, \boldsymbol{s}_i \rangle.
\end{equation}
where the softmax scores are driven both by the token content but also noise with controllable scales.
In the simulation experiments, we use two different kinds of $\boldsymbol\alpha$: (a) $\boldsymbol\alpha = (\alpha_0,...,\alpha_0)$ that results in uniform token distribution, and (b). $\boldsymbol\alpha = (r\alpha,r\alpha^{2},...,r\alpha^{K})$ that results in nonuniform token distribution. We also conduct experiments under varied level of noise by changing $\sigma^2$.
In this simulation study, we evaluated two distinct configurations of the SWE and CSE blocks: an interleaved arrangement (\texttt{SWE-CSE-SWE-CSE}), which is the NEST architecture, and an ad hoc sequential arrangement (\texttt{SWE-SWE-CSE-CSE}).

The simulation uses end-to-end supervised training without self-supervised pretraining, so we adopt Pooling by Multihead Attention~\citep{lee2019set} for set-level pooling as a proof of concept, which is architecturally guaranteed to aggregate token information.

\section{Real World Datasets}

\subsection{Instacart}

\begin{figure}[H]
    \centering
    \includegraphics[width=0.7\linewidth]{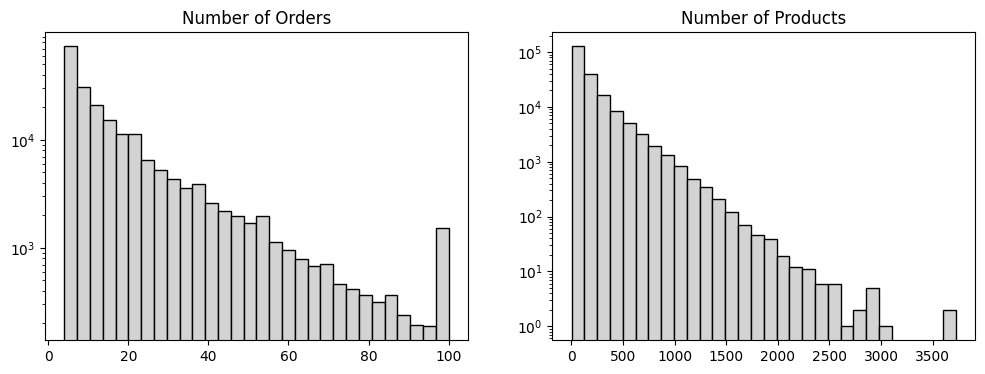}
    \caption{Distributions for \# multisets and \# tokens per user.}
    \label{fig:instacart-dist}
\end{figure}

Instead of using a random split, we follow the official split. For each user in the training set, the chronological history of baskets is used for training. For final inference evaluation, the last basket in the test set is used as the outcome, while the preceding basket history in the test set is used as the input.

\subsection{MIMIC-IV v3.1 - hosp}
\label{appx:mimic}

\begin{figure}[H]
    \centering
    \includegraphics[width=0.7\linewidth]{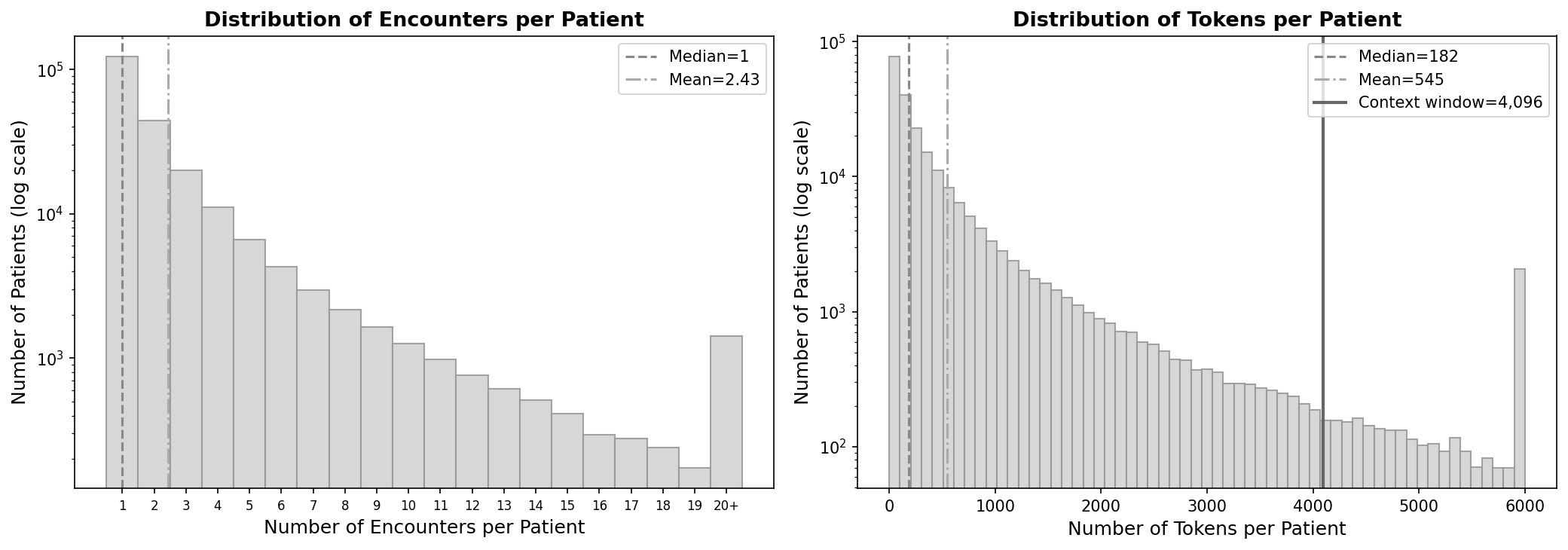}
    \caption{Distributions for \# multisets and \# tokens per patient.}
    \label{fig:mimic-dist}
\end{figure}

\paragraph{Tokenization.} Following ETHOS \citep{renc2024ethos}, we construct four clinical event types with standardized medical codes:

\begin{itemize}[topsep=2pt, itemsep=2pt, parsep=0pt, partopsep=0pt]
    \item \textbf{Diagnoses (\texttt{DX}):} ICD-9-CM codes are converted to ICD-10-CM using CMS General Equivalence Mappings (GEM). When one ICD-9 maps to multiple ICD-10 codes, we select the shortest following ETHOS. Unmappable codes (marked \texttt{NoDx} in GEM) are excluded. Format: \texttt{DX:\{icd10\}}.
    
    \item \textbf{Procedures (\texttt{PR}):} ICD-9-PCS codes are converted to ICD-10-PCS with the same shortest-match strategy. Unmappable codes (\texttt{NoPCS}) are excluded. Format: \texttt{PR:\{icd10pcs\}}.
    
    \item \textbf{Medications (\texttt{MED}):} We extract administered medications from \texttt{emar} (filtering \texttt{event\_txt='Administered'}). Drug names are mapped to ATC codes via a lookup table constructed from \texttt{prescriptions}: the GSN (Generic Sequence Number) field is mapped to ATC using external GSN-ATC mappings. Format: \texttt{MED:\{atc\}}.
    
    \item \textbf{Labs (\texttt{LAB}):} We select the 200 most frequent lab items, covering approximately 95\% of all lab observations. Continuous values are discretized into deciles (Q1--Q10) using population-level quantile thresholds. Labs occurring within 24 hours before admission are also included. Format: \texttt{LAB:\{itemid\}\_Q\{1-10\}}. This approach integrates numerical information into the modeling process.

\end{itemize}

\paragraph{Hierarchical structure.}
The principal NEST-specific adaptation is the \emph{hierarchical SeqSet} formulation: rather than ETHOS's flat 1D timeline with discrete time-interval separator tokens, NEST groups events into a sequence of multisets, where each multiset corresponds to a single hospital admission. Within each multiset, diagnoses---which lack explicit timestamps in MIMIC-IV---are anchored at the admission's start, following ETHOS's convention of using \texttt{admittime} as the diagnosis timestamp; procedures, medications, and lab events are ordered chronologically by their recorded times (\texttt{chartdate} for procedures, \texttt{charttime} for medications and labs). Across multisets, \texttt{days\_since\_prior\_admission} instantiates the inter-set position scalar $\pos_i$ (Section~\ref{sec:dual-contextualize}) for CSE's RoPE.

\subsubsection{Downstream Tasks}
\label{appx:downstream}

We evaluate NEST on five downstream tasks, covering clinical outcome prediction, diagnosis classification, and future event prediction. All tasks use patient-level 70/15/15 train/validation/test splits to prevent information leakage between patients.

\paragraph{Prediction Mechanism.} For all downstream tasks, the pre-trained encoder processes the patient's admission sequence and produces contextualized representations. Each segment is prepended with a \texttt{[CLS]} token. For classification, we extract the \texttt{[CLS]} representation from the last valid segment and pass it through a two-layer MLP (with GELU activation and dropout) to produce logits. Binary and multi-class tasks use cross-entropy loss; multi-label tasks (ICD category) use binary cross-entropy loss.

\paragraph{Task Definitions.}

\begin{itemize}[topsep=2pt, itemsep=4pt, parsep=0pt, partopsep=0pt]

\item \textbf{Inpatient Mortality.} Binary classification predicting whether a patient dies during the hospital stay. Following prior work, we use only data from the first 24 hours after admission to enable early prediction while preventing data leakage. Diagnoses from the target admission are excluded. Evaluated using AUROC and AUPRC.

\item \textbf{30-Day Readmission.} Binary classification predicting whether a patient is readmitted within 30 days after discharge. Patients who die during hospitalization or have no subsequent admission record are excluded from this task. Prediction is made at discharge time using full admission data (including diagnoses). Evaluated using AUROC and AUPRC.

\item \textbf{Prolonged Length of Stay.} Binary classification predicting whether the hospital stay exceeds 7 days. We use only data from the first 48 hours after admission for prediction. Diagnoses from the target admission are excluded to prevent leakage. Evaluated using AUROC and AUPRC.

\item \textbf{ICD Chapter Classification.} Multi-class classification predicting the primary diagnosis ICD chapter (based on the \texttt{seq\_num=1} diagnosis code). ICD-10-CM codes are mapped to 22 chapters; ICD-9-CM codes are mapped to corresponding chapter equivalents. Prediction is made at discharge, with target admission diagnoses excluded. Evaluated using macro-averaged AUROC and AUPRC.

\item \textbf{ICD Category Multi-label Classification.} Multi-label classification predicting all 3-character ICD categories present in the admission. We retain categories appearing in at least 100 admissions. Labels are represented as multi-hot vectors over the category vocabulary. Prediction is made at discharge with target diagnoses excluded. Evaluated using macro-averaged AUROC and AUPRC.

\end{itemize}

\paragraph{Data Leakage Prevention.} For mortality and prolonged LoS tasks, we apply temporal truncation to limit input data to the first 24h and 48h after admission, respectively. For all classification tasks except readmission, we exclude diagnosis codes (\texttt{DX:*}) from the target admission to prevent trivial prediction.

\paragraph{Evaluation Metrics.} Binary classification tasks use AUROC and AUPRC as primary metrics. Multi-class and multi-label tasks use macro-averaged AUROC and AUPRC, computed via one-vs-rest per class then unweighted averaging.

\begin{table}[h]
\centering
\caption{Summary of downstream task configurations.}
\label{tab:downstream_tasks}
\small
\begin{tabular}{lcccc}
\toprule
\textbf{Task} & \textbf{Type} & \textbf{Prediction Time} & \textbf{Exclude DX} & \textbf{Primary Metric} \\
\midrule
Mortality         & Binary         & Admission + 24h  & Yes & AUROC / AUPRC \\
30-Day Readmission & Binary        & Discharge        & No  & AUROC / AUPRC \\
Prolonged LoS     & Binary         & Admission + 48h  & Yes & AUROC / AUPRC \\
ICD Chapter       & Multi-class    & Discharge        & Yes & Macro AUROC / AUPRC \\
ICD Category      & Multi-label    & Discharge        & Yes & Macro AUROC / AUPRC \\

\bottomrule
\end{tabular}
\end{table}

\subsubsection{Ablation studies explanation}
\label{appx:mimic-abl}
\textbf{Both encoders are essential, and depth cannot substitute for either.}
Removing CSE degrades all three tasks substantially; doubling SWE depth to match NEST's parameter count recovers only part of the gap. Replacing SWE's parametric attention with parameter-free mean pooling incurs an even larger drop, exceeding the parameter-matched SWE-only variant on every metric. Depth alone cannot substitute on either side.

\textbf{CSE's contribution comes from temporal structure, not from cross-set attention alone.}
Shuffling encounter order at fine-tuning time degrades performance to the level of removing CSE entirely, indicating that CSE's downstream contribution is the temporal structure it encodes rather than cross-set attention itself.

\textbf{The \texttt{[CLS]} pathway carries set-level information that mean pooling cannot recover.}
Replacing \texttt{[CLS]} with mean pooling at fine-tuning time degrades all tasks. Mean pooling can underperform even Freeze CSE despite leaving CSE trainable, indicating that the set-level information MSM places in \texttt{[CLS]} during pretraining cannot be recovered from token averages by refitting CSE.

\textbf{SWE representations require more task-specific adaptation than CSE representations.} Freezing SWE hurts more than freezing CSE on every task: within-encounter representations carry more of the task-specific signal that fine-tuning must adapt, while pretrained cross-encounter temporal patterns transfer with less adjustment.

Together, these ablations show that NEST's gains arise from its architectural commitments rather than capacity or pretraining artifacts.

\subsection{Proprietary EHR}
\label{appx:proprietary-ehr}

\begin{figure}
    \centering
    \includegraphics[width=0.75\linewidth]{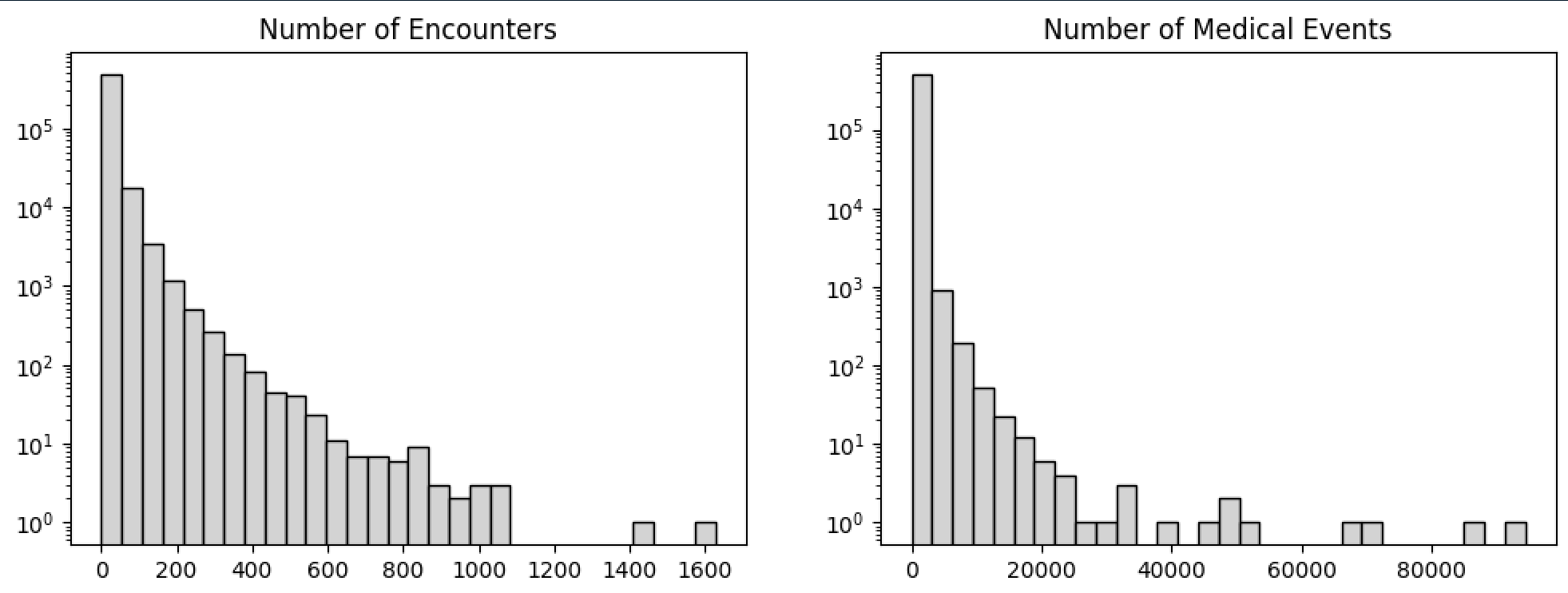}
    \caption{Distributions for \# multisets and \# tokens per patient.}
    \label{fig:ehr-distr}
\end{figure}

The train-test split in the downstream task cohorts follows the deterministic birth-date–based split of the pretrain data. Patients born on the 17th–24th of each month are assigned to the test set and the remainder to the training set. 

\subsubsection{ETL pipeline}

We implemented an ETL data processing pipeline to extract multi-source, heterogeneous EHR tables from a clinical Oracle database conforming to the PCORnet Common Data Model (CDM). The ETL process is organized around patients' longitudinal clinical histories and retrieves structured clinical events from patient demographics, diagnoses, procedures, laboratory measurements, medication administrations and prescriptions, vital signs, and immunizations. All records are joined, filtered, and indexed at both the patient and encounter levels. Clinical records from different domains are subsequently normalized into a shared base schema, consisting of standardized patient and encounter identifiers, a unified event timestamp, and event-type--specific code fields along with the required numerical attributes, enabling consistent event-level representation across domains for downstream sequence modeling.

\subsubsection{Downstream Cohorts}

\paragraph{Recurrent Acute Otitis Media (rAOM)}
rAOM, or recurrent ear infection, is a common medical condition in early childhood. While most children will experience an ear infection during their first years of life, approximately 20-30\% of the pediatric population will experience recurrent ear infections defined as 3 episodes within a 6 month period or 4 episodes within a year period. Children who experience rAOM often require surgical intervention in the form of ear tube placement. We sought to develop a predictive model for the probability a first AOM develops into rAOM. 

We identified a cohort of 3,852 children with a first AOM. Of these, 22.0\% transitioned into rAOM. We used the same embeddings framework as above to process prenatal, birth and developmental clinical data. Since our eligible cohort had reached age 4 --- an age at which rAOM is unlikely to developed --- we modeled the binary diagnosis indicator as the outcome. Our training and testing sizes were 2,822 and 1,030, respectively.

\paragraph{Autism Spectrum Disorder (ASD)}
ASD is a neuro-developmental condition that typically presents within the first years of life. The mean age of the diagnosis is around 5 years. Children are typically screened during their 18-month well-child visit and referred for confirmatory diagnosis. Previous works has shown that early medical conditions (e.g., gastrointestinal disease) can serve as early indicators of a future ASD diagnosis. Our team has been focusing on developing automated early screening tools for ASD.

We identified a cohort of 43,945 children who had a well-child visit at our institution between 12 and 24 months of age. We used the well-child visits closest to 18 months as the index encounter.  We required two distinct medical encounters with an ICD-10 for ASD to label an ASD case. Across our cohort's follow-up, 2.05\% were diagnosed with ASD. Our training and testing sizes were 35,000 and 8,945, respectively.

\paragraph{30-day Hospital Readmission}
Hospital readmission is a commonly used measure of short-term healthcare utilization and quality of care in pediatric populations. We focus on 30-day hospital readmission following emergency inpatient or inpatient hospitalization encounters among children.

We identified a cohort of 26,617 children who had at least one emergency inpatient (EI) or inpatient hospitalization (IP) encounter at our institution between January 1, 2016 and age 18. After excluding birth-related encounters, we used all qualifying EI or IP encounters with a discharge disposition of Home/Self Care (HO) or Home Health (HH) as index encounters, allowing multiple index encounters per patient. We defined the outcome as 30-day readmission, requiring the occurrence of a subsequent Emergency Department (ED), EI, or IP encounter within 30 days following discharge from the index encounter. The final cohort consisted of 45,755 index encounters, of which 19.1\% were followed by a 30-day readmission. Our training and testing sizes were 33,683 and 12,072, respectively.

\section{Implementation Details}
\label{app:implementation_details}

This section reports implementation settings used in pretraining, downstream fine-tuning, and ablation experiments. To ensure fair comparison, we standardize key hyperparameters across models where possible: all MLM-based pretraining uses mask ratio \texttt{0.20}, and all downstream fine-tuning uses learning rate \texttt{1e-5}. Per-model differences in scheduler, warmup, weight decay, and pretraining objective reflect each baseline's original protocol where deviation would alter the model's identity. Vocabulary size is read at runtime from the tokenizer; in our checkpoints this is \texttt{20,377} (CEHR-BERT adds 18 extra special tokens for \texttt{[VS]}, \texttt{[VE]}, \texttt{W0--W3}, \texttt{M1--M11}, \texttt{LT}, yielding \texttt{20,395}).

\begin{table*}[t]
\centering
\small
\caption{Backbone architecture and pretraining setup, including objective type, masking policy, warmup, and loss weighting used in reported runs.}
\label{tab:impl_backbone}
\resizebox{\linewidth}{!}{
\begin{tabular}{p{2.0cm}p{2.0cm}p{4.5cm}p{4.5cm}p{1.8cm}}
\toprule
Model & Objective & Key pretraining settings & Core architecture & Scale \\
\midrule
NEST (default) & MLM + MSM & token mask \texttt{0.20}, encounter mask \texttt{0.40}, loss weights MLM\,$=$\,MSM\,$=$\,\texttt{1.0}, no warmup & \texttt{d\_model=768}, \texttt{n\_blocks=6}, \texttt{n\_heads=12}, \texttt{d\_ff=2048}, SWE+RoPE, T2V (\texttt{scale=1.0}), \texttt{max\_seg=8}, \texttt{max\_seq\_len=512} & $\sim$120M \\
NEST (token mode) & MLM only & token mask \texttt{0.20}, MSM weight \texttt{0}, no warmup & Same backbone as default NEST & $\sim$120M \\
CORE-BEHRT & MLM & mask \texttt{0.20}, no warmup & \texttt{768/12/12/2048}, \texttt{max\_seq\_len=2048} & $\sim$116M \\
HEART$^\dagger$ & MLM & mask \texttt{0.20}, no warmup & \texttt{768/10/12/2048}, edge-aware attention, \texttt{max\_seq\_len=768} & $\sim$117M \\
MOTOR & TTE & \texttt{n\_tasks=512}, \texttt{n\_time\_bins=7}, \texttt{final\_layer\_size=32}, no warmup & Encoder \texttt{768/12/12/2048} & $\sim$116M (enc) \\
CEHR-BERT & MLM & mask \texttt{0.20}, warmup ratio \texttt{0.06} & \texttt{hidden=768}, \texttt{layers=12}, \texttt{heads=12}, \texttt{intermediate=3072}, ATT tokens, \texttt{n\_time\_embd=16} & $\sim$117M \\
GT-BEHRT & NAM $\rightarrow$ MNP+VTP & NAM mask \texttt{0.20} (10 epochs), VTP mask \texttt{0.50}, warmup ratio \texttt{0.1} & \texttt{hidden=780}, \texttt{graph\_layers=3}, \texttt{bert\_layers=12}, \texttt{heads=12} & $\sim$118M \\
Hi-BEHRT (BYOL) & BYOL & segment mask \texttt{0.20}, momentum \texttt{0.996}, warmup ratio \texttt{0.1} & \texttt{d\_model=768/9+10/12/2048}, extractor/aggregator hierarchy, \texttt{window=50}, \texttt{stride=30}, \texttt{t2v\_dim=64} & $\sim$117M (BYOL: trainable parameters; target network is an EMA copy) \\

\bottomrule
\end{tabular}
}
\\[0.4em]
{\footnotesize $^\dagger$HEART configurations are reported for completeness; the model was excluded from main results due to OOM failures from its $O(\sum_i n_i^2)$ pairwise edge embeddings (see Section~\ref{sec:mimic-iv-ablation}).}
\end{table*}

\begin{table*}[t]
\centering
\small
\caption{Optimization protocol for pretraining. LR schedulers reflect each model's original protocol; models without a scheduler use a constant LR after any warmup.}
\label{tab:impl_optim_clean}
\resizebox{\linewidth}{!}{
\begin{tabular}{p{3.0cm}p{2.0cm}p{3.5cm}p{1.8cm}p{3.0cm}}
\toprule
Model family & LR & Scheduler & Epochs & Batch size \\
\midrule
NEST & \texttt{5e-5} & none & \texttt{30} & \texttt{36} \\
CORE/HEART & \texttt{5e-5} & none & \texttt{30} & CORE: \texttt{36}; HEART: \texttt{36} \\
MOTOR & \texttt{5e-5} & none & \texttt{30} & \texttt{36} \\
CEHR-BERT & \texttt{3e-5} & linear warmup (\texttt{0.06}) + cosine decay & \texttt{30} & \texttt{36} \\
GT-BEHRT & \texttt{5e-5} & linear warmup (\texttt{0.1}) + linear decay & NAM \texttt{10} + main \texttt{20} & \texttt{36} \\
Hi-BEHRT (BYOL) & \texttt{5e-5} & linear warmup + cosine decay & \texttt{30} & \texttt{36} \\
\bottomrule
\end{tabular}
}
\end{table*}

\begin{table*}[t]
\centering
\small
\caption{Downstream fine-tuning protocol used across all models and tasks.}
\label{tab:impl_finetune}
\resizebox{\linewidth}{!}{
\begin{tabular}{p{4.0cm}p{10.0cm}}
\toprule
Setting & Value \\
\midrule
Optimizer & AdamW (weight decay: \texttt{0.001}) \\
Learning rate & \texttt{1e-5} \\
Scheduler & linear warmup + cosine decay \\
Epochs & \texttt{30} \\
Effective batch size & model-specific, following each training entrypoint default (reported in Table~\ref{tab:impl_optim_clean}) \\
Classification head & two-layer MLP on top of the last-segment \texttt{[CLS]} representation (or mean-pooled non-\texttt{[CLS]} tokens for the \texttt{[CLS]}\,$\to$\,mean pool ablation), with model-specific activation following each model's original implementation: GELU for NEST, CORE-BEHRT, MOTOR, CEHR-BERT, and GT-BEHRT; ReLU for HEART; Tanh for Hi-BEHRT \\
Loss function & cross-entropy for binary and multi-class tasks; binary cross-entropy with logits for multi-label tasks (Hi-BEHRT uses cross-entropy for multi-label, matching its original implementation) \\
Early stopping metric & validation AUPRC for binary tasks (all models); for multi-class and multi-label tasks, NEST uses validation AUPRC while baselines use validation AUROC, following each model's training-script default \\
Backbone treatment & full fine-tuning by default; ablation variants (\texttt{Freeze SWE}, \texttt{Freeze CSE}, \texttt{Linear probe}) freeze specified components \\
Data split and seed control & Following the NEST protocol as the unified reference: patient-level split with fixed seed \texttt{42}; pretraining uses \texttt{80/10/10} and downstream fine-tuning uses \texttt{70/15/15} (train/val/test). \\
Early stopping patience & \texttt{10} for pretraining and \texttt{3} for downstream fine-tuning. \\
Mixed precision detail & CUDA AMP with FP16 (autocast + GradScaler); bf16 is not used. \\
\bottomrule
\end{tabular}
}
\\[0.4em]
{\footnotesize Any metric, loss, or weight-decay differences above are inherited from the original model-specific training scripts and are reported explicitly for transparency.}
\end{table*}

\begin{table*}[t]
\centering
\small
\caption{Task and seed protocol. All reported ablation means use 5 seeds.}
\label{tab:impl_task_seed}
\resizebox{\linewidth}{!}{
\begin{tabular}{p{3.3cm}p{7.0cm}p{4.2cm}}
\toprule
Category & Scope & Seeds \\
\midrule
Downstream tasks & \texttt{mortality}, \texttt{readmission\_30d}, \texttt{prolonged\_los}, \texttt{icd\_chapter}, \texttt{icd\_category\_multilabel} & Per-task runs use fixed seed control via entrypoints \\
Ablation reporting protocol & Group A + Group B results are averaged over fixed random seeds & \texttt{42, 123, 456, 2025, 3407} \\
\bottomrule
\end{tabular}
}
\end{table*}

\paragraph{Hardware and mixed precision.}
All runs use one NVIDIA H200 GPU, 10 CPU cores, and 36GB host memory per job. Mixed precision is enabled via \texttt{--use\_amp} (CUDA AMP autocast with FP16 + GradScaler).

\section{Ablation Setup}
\label{appx:ablation-setup}

\subsection{Group B: Architectural Ablations}
Group B weakens internal NEST components and re-pretrains from scratch under matched data and optimization budget. The variants are:
\begin{itemize}[topsep=2pt, itemsep=2pt]
    \item \textbf{$-$\,CSE (6L/12L SWE-only):} CSE blocks removed entirely. The 12L variant is parameter-matched to NEST.
    \item \textbf{SWE\,$\to$\,MeanPool (6L/12L):} SWE's attention and FFN are replaced with parameter-free mean pooling over non-\texttt{[CLS]} tokens, with CSE preserved. A broadcast residual propagates cross-encounter context back to all token positions.\footnote{A direct ``remove SWE'' variant is infeasible since CSE operates on \texttt{[CLS]} tokens produced by SWE; MeanPool is the closest empirical proxy.} The 12L variant is parameter-matched to NEST.
\end{itemize}

\subsection{Group A: Behavioral Ablations}
Group A keeps the NEST (MLM+MSM) checkpoint fixed and intervenes only at fine-tuning, isolating individual inductive biases:
\begin{itemize}[topsep=2pt, itemsep=2pt]
    \item \textbf{Linear probe:} both encoders frozen; only the classification head is trained. Lies below both Freeze variants on every metric, confirming that fine-tuning some backbone component is necessary for full task adaptation.
    \item \textbf{Temporal shuffle:} encounter order randomly permuted before fine-tuning.
    \item \textbf{\texttt{[CLS]} $\to$ mean pool:} downstream classifier reads mean-pooled non-\texttt{[CLS]} tokens instead of \texttt{[CLS]}.
    \item \textbf{Freeze SWE / Freeze CSE:} one encoder frozen, the other trainable.
\end{itemize}

\section{Attention Pattern Visualization}
\label{sec:attn-viz}
\subsection{Hierarchical Pattern}
\begin{figure}[H]
    \centering
    \includegraphics[width=0.5\linewidth]{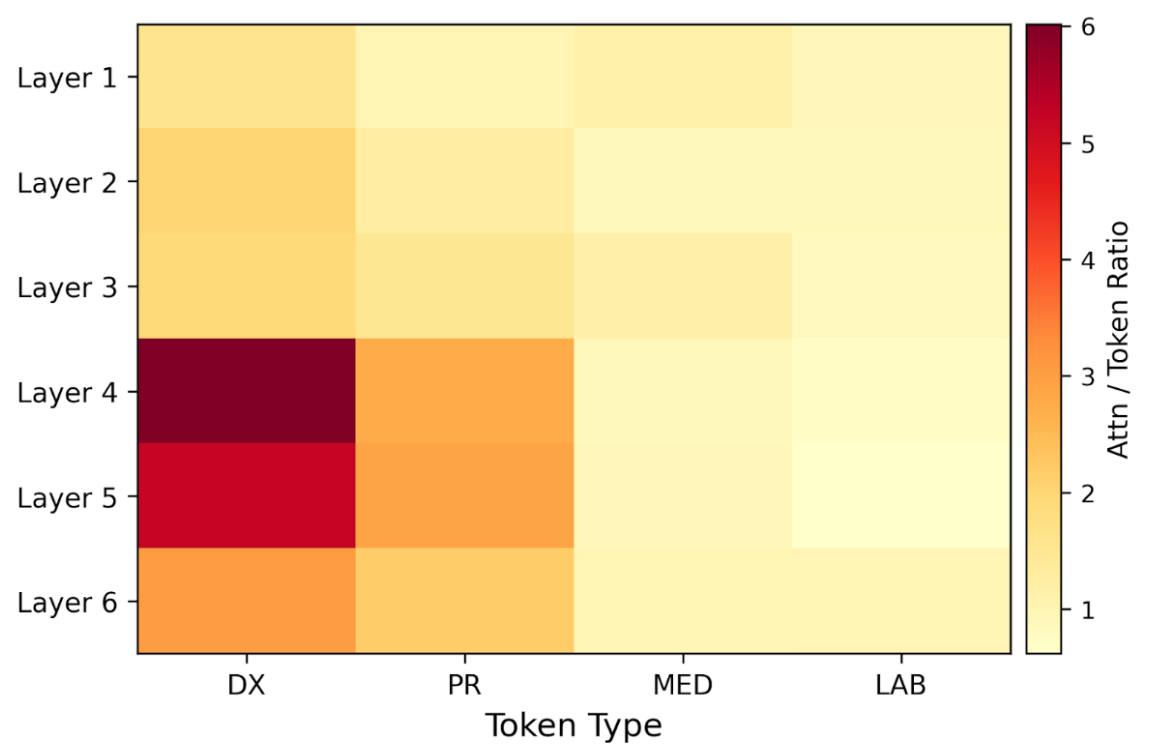}
    \caption{NEST captures hierarchical structure via within-set attention in the SWE module. The heatmap displays the deviation of \texttt{[CLS]} attention weights from a uniform baseline across context tokens using the MIMIC-IV data. \texttt{DX}: diagnosis; \texttt{PR}: procedure; \texttt{MED}: medication; \texttt{LAB}: lab measures.}
    \label{fig:hi-attn}
\end{figure}

\subsection{Temporal Pattern}
\begin{figure}[H]
    \centering
    \includegraphics[width=0.8\linewidth]{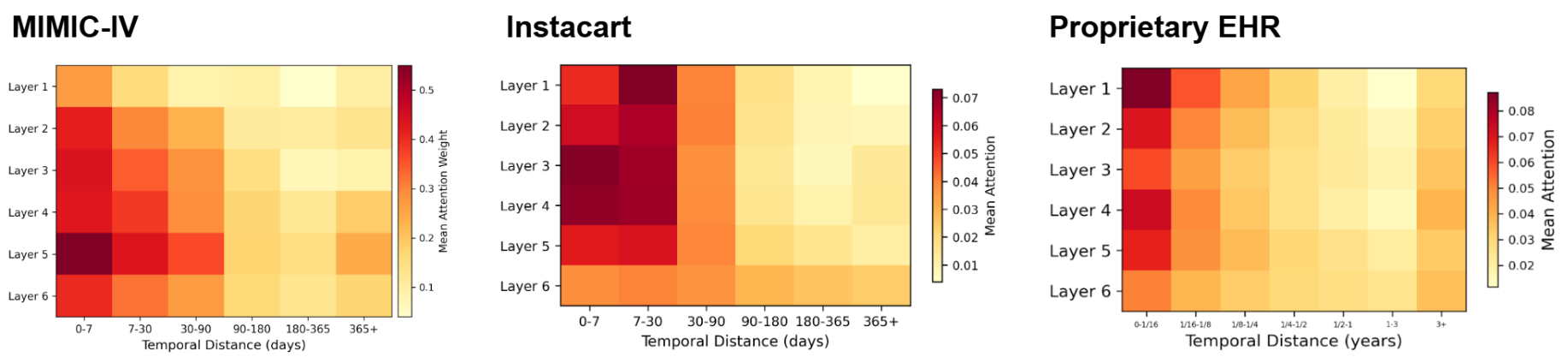}
    \caption{CSE learns temporal recency pattern among the \texttt{[CLS]} tokens.}
    \label{fig:t-attn}
\end{figure}



\end{document}